\title{A Hybrid PAC Reinforcement Learning Algorithm}
\author{
 Ashkan Zehfroosh \\
  Department of Mechanical Engineering\\
  University of Delaware\\
  Newark, DE 19716 \\
  \texttt{ashkanz@udel.edu} \\
   \And
 Herbert G. Tanner \\
  Department of Mechanical Engineering\\
  University of Delaware\\
  Newark, DE 19716 \\
  \texttt{btanner@udel.edu} \\
}
\newcommand\blfootnote[1]{%
  \begingroup
  \renewcommand\thefootnote{}\footnote{#1}%
  \addtocounter{footnote}{-1}%
  \endgroup
}
\begin{document}

\setlength{\abovedisplayskip}{8pt}
\setlength{\belowdisplayskip}{9pt}

\blfootnote{This work was supported by NIH under grant \# R01HD87133}

\acrodef{hri}[\textsc{hri}]{human-robot interaction}
\acrodef{mdp}[\textsc{mdp}]{Markov decision process}
\acrodef{smdp}[\textsc{smdp}]{Semi-Markov decision process}
\acrodef{ai}[\textsc{ai}]{Artificial Intelligence}
\acrodef{ml}[\textsc{ml}]{maximum likelihood}
\acrodef{pomdp}[\textsc{pomdp}]{partially observable Markov decision process}
\acrodef{momdp}[\textsc{momdp}]{mixed observability Markov decision process}
\acrodef{nlp}[\textsc{nlp}]{natural language processing}
\acrodef{pac}[\textsc{pac}]{probably approximately correct}
\acrodef{rl}[\textsc{rl}]{reinforcement learning}
\acrodef{ddq}[\textsc{ddq}]{Dyna-Delayed Q-learning}
\acrodef{tdm}[\textsc{tdm}]{Temporal Difference Models}

\maketitle
\begin{abstract}
This paper offers a new hybrid probably approximately correct (\textsc{pac}) reinforcement learning (\textsc{rl}) algorithm for Markov decision processes (\textsc{mdp}s) that intelligently maintains favorable features of its parents. 
The designed algorithm, referred to as the Dyna-Delayed Q-learning (\textsc{ddq}) algorithm, combines model-free and model-based learning approaches while outperforming both in most cases. 
The paper includes a \textsc{pac} analysis of the \textsc{ddq} algorithm and a derivation of its sample complexity. 
Numerical results are provided to support the claim regarding the new algorithm's sample efficiency compared to its parents as well as the best known model-free and model-based algorithms in application.
\end{abstract}
\medskip


\section{Introduction}
\label{intro}
While several \ac{rl} algorithms can apply to a dynamical system modelled as a \ac{mdp}, few are \ac{pac}---namely, they can guarantee how soon the algorithm will converge to the near-optimal policy. 
Existing \ac{pac} \ac{mdp} algorithms can be broadly divided into two groups: model-based algorithms like \cite{kearns2002,brafman2002, strehl2008j,strehl2012,szita2010,lattimore2014}, and model-free Delayed Q-learning algorithms \cite{strehl2006,jin2018q,dong2019q}. 
Each group has its advantages and disadvantages. 
The goal here is to capture the advantages of both groups, while preserving \ac{pac} properties. 

Model-free \ac{rl} is a powerful approach for learning complex tasks. 
For many real-world learning problems, however, the approach is taxing in terms of the size of the necessary body of data---what is more formally referred to as its \emph{sample complexity}. 
The reason is that model-free \ac{rl} ignores rich information from state transitions and only relies on the observed rewards for learning the optimal policy \cite{pong2018}. 
A popular model-free \ac{pac} \ac{rl} \ac{mdp} algorithm is known as \emph{Delayed Q-learning} \cite{strehl2006}. 
The known upper-bound on the sample complexity of Delayed Q-learning suggests that it outperforms model-based alternatives only when the state-space size of the \ac{mdp} is relatively large \cite{strehl2009}. 

Model-based \ac{rl}, on the other hand, utilizes all information from state transitions to learn a model, and then uses that model to compute an optimal policy. 
The sample complexity of model-based \ac{rl} algorithms are typically lower than that of model-free ones \cite{nagabandi2018}; the trade-off comes in the form of computational effort and possible bias \cite{pong2018}.

A popular model-based \ac{pac} \ac{rl} \ac{mdp} algorithms is \emph{R-max} \cite{brafman2002}. 
The derived upper-bound for the sample complexity of the R-max algorithm \cite{kakade2003} suggests that this model-based algorithm shines from the viewpoint of sample efficiency when the size of the state/action space is relatively small. 
This efficiency assessment can typically be generalized to most model-based algorithms.

Overall, R-max and Delayed Q-learning are incomparable in terms of their bound on the sample complexity. 
For instance, \emph{for the same sample size},
R-max is bound to return a policy of higher accuracy compared to Delayed Q-learning, while the latter will converge much faster on problems with large state spaces.

Typically, model-free algorithms circumvent the model learning stage of the solution process, a move that by itself reduces complexity in problems of large size. 
In many applications, however, model learning is not the main complexity bottleneck.  
Neurophysiologically-inspired hypotheses \cite{lee2014} have suggested that the brain approach toward complex learning tasks can be model-free (trial and error) or model-based (deliberate planning and computation) or even combination of both, depending on the amount and reliability of the available information.
This intelligent combination is postulated to contribute in making the process efficient and fast. 
The design of the \ac{pac} \ac{mdp} algorithm presented in this paper is motivated by such observations.
Rather than following strictly one of the two prevailing directions, it orchestrates a marriage of a model-free (Delayed Q-learning) with a model-based (R-max) \ac{pac} algorithm, in order to give rise to a new \ac{pac} algorithm (\ac{ddq}) that combines the advantages of both.

The search for a connection between model-free and model-based \ac{rl} algorithms begins with the Dyna-Q algorithm \cite{sutton1991}, in which synthetic generated experiences based on the learned model are used to expedite Q-learning. 
Some other examples that continued along this thread of research are partial model back propagation \cite{heess2015}, training a goal condition Q function \cite{parr2008,sutton2011,schaul2015,andrychowicz2017}, and integrating model-based linear quadratic regulator based algorithm into model-free framework of path integral policy improvement \cite{chebotar2017}. The recently introduced \ac{tdm} provides a smooth(er) transition from model-free to model-based,  during the learning process \cite{pong2018}. What is missed in the literature is a \ac{pac} combination of model-free and model-based frameworks. 

Here the Dyna-Q idea is leveraged to combine two popular \ac{pac} algorithms, one model-free and one model-based, into a new one named \ac{ddq}, which is not only \ac{pac} like its parents, but also inherits the best of both worlds: it will intelligently behave more like a model-free algorithm on large problems, and operate more like a model-based algorithm on problems that require high accuracy, being content with the smallest among the sample sizes required by its parents.
Specifically, the sample complexity of  \ac{ddq}, in the worst case, matches the minimum bound between that of R-max and Delayed Q-learning, and often outperforms both. 
Note that \ac{ddq} algorithm is purely online and does not assume accessing to a generative model like in \cite{azar2013}. 
While the provable worst case upper bound on the sample complexity of \ac{ddq} algorithm is higher than the best known model-based~\cite{szita2010} and model-free~\cite{jin2018q,dong2019q} algorithms, we can demonstrate (see Section \ref{sim}) that the hybrid nature allows for superior performance of the \ac{ddq} algorithm in applications. 
The availability of a hybrid \ac{pac} algorithm like \ac{ddq} in hand, obviates the choice between a model-free and a model-based approach. 

Our own motivation for developing of this new breed of \ac{rl} algorithms comes from application problems in the area of early pediatric motor rehabilitation, where robots can be used as smart toys to socially interact with infants who have special needs, and engage with them socially in play-based activity that involves gross motion. 
There, \ac{mdp} models can be constructed to capture the dynamics of the social interaction between infant and robot, and \ac{rl} algorithms can guide the behavior of the robot as it interacts with the infant in order to achieve the maximum possible rehabilitation outcome---the latter possibly quantified by the overall length of infant displacement, or the frequency of infant motor transitions. 
Some early attempts at modeling such instances of \ac{hri} did not result in models of particularly large state and action spaces, but were particularly complicated by the absence of sufficient data sets for learning~\cite{zehfroosh2017,zehfroosh2018}. 
This is because every child is different, and the exposure of each particular infant to the smart robotic toys (during which \ac{hri} data can be collected) is usually limited to a few hours per month.
There is a need, therefore, for reinforcement learning approaches that can maintain efficiency and accuracy even when the learning set is particularly small.

The paper starts with some technical preliminaries in Section \ref{pre}. 
This section introduces the required properties of a \ac{pac} \ac{rl} algorithm in the form of a well-known theorem. 
The \ac{ddq} algorithm is introduced in Section \ref{ddq}, with particular emphasis given on its update mechanism. 
Section~\ref{s} presents the mathematical analysis that leads the establishment of the algorithm's \ac{pac} properties, and the analytic derivation of its sample complexity. 
Finally, Section \ref{sim} offers numerical data to support the theoretical sample complexity claims, through an illustrative grid-world example. 
The data indicate that \ac{ddq} outperforms Delayed Q-learning and R-max in terms of the required samples to learn near-optimal policy. 
To promote readability, the proofs of most of the  lemmas supporting the proof of our main result are included separately in the paper's Appendix.
\medskip


\section{Technical Preliminaries}
\label{pre}
A finite \ac{mdp} $M$ is a tuple $\{S,A,R,T,\gamma\}$ with elements 

\begin{center}
\begin{threeparttable}
\begin{tabbing}
\hspace*{3.5cm} \= \kill 
$S$ \> a finite set of states\\
$A$ \> a finite set of actions\\
$R:S\times A \to [0,1]$ \> the \emph{reward} from executing $a$ at  $s$\\
$T:S\times A\times S \to [0,1]$ \> the \emph{transition probabilities} \\
$\gamma \in [0,1)$ \> the \emph{discount factor}
\end{tabbing}
\end{threeparttable}
\end{center}
\medskip

A \emph{policy} $\pi$ is a mapping $\pi : S \to A$ that selects an action $a$ to be executed at state $s$.
A policy is \emph{optimal} if it maximizes the expected sum of discounted rewards; if $t$ indexes the current time step and $a_t$, $s_t$ denote current action and state, respectively, then this expected sum is written $\mathbb{E}_{M}\big\{\sum_{t=0}^{\infty}\gamma^{t} R(s_{t},a_{t}) \big\}$. 
The discount factor $\gamma$ here reflects the preference of immediate rewards over future ones. 
The \emph{value} of state $s$ under policy $\pi$ in \ac{mdp} $M$ is defined as
\begin{equation*} \label{value}
v_{M}^{\pi}(s)=\mathbb{E}_{M}\left\{ R\big(s,\pi(s)\big) + \sum_{t=1}^{\infty}\gamma^{t} R\big(s_{t},\pi(s_t)\big) \right\} 
\end{equation*}
Note that an upper bound for the value at any state is $v_{\max} = \tfrac{1}{1-\gamma}$.
Similarly defined is the value of \emph{state-action pair} $(s,a)$ under policy $\pi$:
\begin{equation*}  \label{qvalue}
Q_{M}^{\pi}(s,a)=\mathbb{E}_{M}\left\{ R(s,a) + \sum_{t=1}^{\infty}\gamma^{t} R\big(s_{t},\pi(s_t)\big) \right\} 
\end{equation*}
Every \ac{mdp} $M$ has at least one optimal policy $\pi^*$ that results in an optimal value (or state-action value) assignment at all states; the latter is denoted $v_{M}^*(s)$ (or $Q_{M}^*(s,a)$, respectively). 

The standard approach to finding the optimal values is through the search for a fix point of the Bellman equation
\begin{equation*} \label{bellman-value}
v_{M}^{*}(s)= \max_a \{ R(s,a) + \gamma \sum_{s'}T(s,s',a) v_{M}^{*}(s') \}
\end{equation*}
which, after substituting $V_{M}^{*}(s')=\max_a Q_{M}^{*}(s',a)$, can equivalently be written in terms of state-action values
\begin{equation*}  \label{bellman-Q}
Q_{M}^{*}(s,a)=  R(s,a) + \gamma \sum_{s'} T(s,s',a) v_{M}^{*}(s') 
\end{equation*}

Reinforcement learning, (\textsc{rl}) is a procedure to obtain an optimal policy in an \ac{mdp}, when the actual transition probabilities and/or reward function are not known. 
The procedure involves exploration of the \ac{mdp} model. 
An \ac{rl} algorithm usually maintains a table of state-action pair value estimates $Q(s,a)$ that are updated based on the exploration data. 
We denote $Q_{t}(s,a)$ the currently stored value for state-action pair $(s,a)$ at timestep $t$ during the execution of an \ac{rl} algorithm. 
Consequently, $v_{t}(s)=\max_a Q_t(s,a)$. 
An \ac{rl} algorithm is \emph{greedy} if it at any timestep $t$, it always executes action $a_t = \mathrm{arg} \max_{a \in A} Q_t(s_t,a)$.
The policy in force at time step $t$ is similarly denoted $\pi_t$.
In what follows, we denote $|S|$ the cardinality of a set $S$.

Reinforcement learning algorithms have been classified as \emph{model-based} or \emph{model-free}. 
Although the characterization is debatable, what is meant by calling an \ac{rl} algorithm ``model-based,'' is that $T$ and/or $R$ are estimated based on online observations (exploration data), and the resulting estimated model subsequently informs the computation of the the optimal policy.
A model-free \ac{rl} algorithm, on the other hand, would skip the construction of an estimated \ac{mdp} model, and search directly for an optimal policy over the policy space.
An \ac{rl} algorithm is expected to converge to the optimal policy, practically reporting a near-optimal one at termination.

Probably approximately correct (\ac{pac}) analysis of \ac{rl} algorithms deals with the question of how fast an \ac{rl} algorithm converges to a near-optimal policy. 
An \ac{rl} algorithm is \ac{pac} if there exists a probabilistic bound on the number of exploration steps that the algorithm can take before converging to a near-optimal policy. 

\newtheorem{definition}{Definition}

\medskip
\begin{definition}
Consider that an \ac{rl} algorithm $\mathcal{A}$ is executing on \ac{mdp} $M$. Let $s_t$ be the visited state at time step $t$ and $\mathcal{A}_t$ denotes the (non-stationary) policy that the $\mathcal{A}$ executes at $t$. 
For a given $\epsilon > 0$ and $\delta >0$, $\mathcal{A}$ is a \emph{\ac{pac} \ac{rl} algorithm} if there is an $N>0$ such that with probability at least $1-\delta$ and for all but $N$ time steps,
\begin{equation}
v^{\mathcal{A}_t}_M(s_t) \geq v^{*}_M(s_t)-\epsilon \qquad
\label{epsilon-optimality}
\end{equation}
\end{definition}
\medskip

Equation \eqref{epsilon-optimality} is known as the $\epsilon$-optimality condition and $N$ as the \emph{sample complexity} of $\mathcal{A}$, which  is a function of $\big(|S|,|A|,\frac{1}{\epsilon},\frac{1}{\delta},\frac{1}{1-\gamma}\big)$.

\medskip
\begin{definition}
Consider \ac{mdp} $M= \{S,A,R,T,\gamma\}$ which at time $t$ has a set of state-action value estimates $Q_t(s,a)$, and let $K_t \subseteq S \times A$ be a set of state-action pairs labeled \emph{known}. 
The \emph{known state-action \ac{mdp}} 
\[
M_{K_t}= \big\{S \cup \{z_{s,a} | (s,a) \notin K_t \},A,T_{K_t},R_{K_t},\gamma\big\} 
\]
is an \ac{mdp} derived from $M$ and $K_t$ by defining new states $z_{s,a}$ for each  unknown state-action pair $(s,a) \notin K_t$, with self-loops for all actions, i.e., $T_{K_t}(z_{s,a},\cdot,z_{s,a})= 1$.
For all $(s,a)\in K_t$, it is  $R_{K_t}(s,a) = R(s,a)$ and  $T_{K_t}(s,a,\cdot) = T(s,a,\cdot)$.
When an unknown state-action pair $(s,a) \notin K_t$ is experienced, $R_{K_t}(s,a) = Q_t(s,a)(1-\gamma)$ and the model jumps to $z_{s,a}$ with $T_{K_t}(s,a,z_{s,a})=1$;
subsequently, $R_{K_t}(z_{s,a},\cdot)=Q_t(s,a)(1-\gamma)$.  
\end{definition}
\medskip

Let $K_t$ be set of current known state-action pairs of an \ac{rl} algorithm $\mathcal{A}$ at time $t$, and allow
$K_t$ to be arbitrarily defined as long as it depends only on the history of exploration data up to $t$. 
Any $(s,a) \notin K_t$ experienced at time step $t$ marks an \emph{escape event}. 
\medskip

\newtheorem{theorem}{Theorem}
\begin{theorem}[\!\!\cite{strehl2009}]
\label{pac}
Let $\mathcal{A}$ be a greedy \ac{rl} algorithm for an arbitrary \ac{mdp} $M$, and let $K_t$ be the set of current known state-action pairs, defined based only on the history of the exploration data up to timestep $t$. 
Assume that $K_t = K_{t+1}$ unless an update to some state-action value occurs or an escape event occurs at timestep $t$, and that $Q_t(s,a) \leq v_\mathrm{max}$ for all $(s,a)$ and $t$. 
Let $M_{K_t}$ be the known state-action \ac{mdp} at timestep $t$ and $\pi_{t}(s)=\mathrm{arg}\max_a Q_t(s,a)$ denote the greedy policy that $\mathcal{A}$ executes. 
Suppose now that for any positive constant $\epsilon$ and $\delta$, the following conditions hold with probability at least $1-\delta$ for all $s$, $a$ and  $t$:
\begin{description}
    \item[optimism:] \qquad $v_t(s) \geq v^{*}_M(s)-\epsilon$ 
    \item[accuracy:] \qquad $v_t(s)-v^{\pi_t}_{M_{K_t}}(s) \leq \epsilon$ 
    \item[complexity:] \hspace{5ex} sum of number of timesteps with $Q$-value updates plus number of timesteps with escape events is bounded by $\zeta(\epsilon,\delta)>0$. 
\end{description}
Then, executing algorithm $\mathcal{A}$ on any \ac{mdp} $M$ will result in following a $4\epsilon$-optimal policy on all but
\begin{equation}
\label{learningcom}
    \mathcal{O} \left( \frac{\zeta(\epsilon,\delta)}{\epsilon (1-\gamma)^2} \ln{(\tfrac{1}{\delta})} \ln{(\tfrac{1}{\epsilon (1-\gamma)})} \right) \simeq \mathcal{O} \left( \frac{\zeta(\epsilon,\delta)}{\epsilon (1-\gamma)^2} \right)
\end{equation}
timesteps, with probability at least $1-2\delta$.
\end{theorem}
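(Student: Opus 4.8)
The plan is to follow the standard ``explore-or-exploit'' template for \ac{pac}-\ac{mdp} analysis. First I would fix two auxiliary quantities: an $\epsilon$-horizon $H := \frac{1}{1-\gamma}\ln\frac{1}{\epsilon(1-\gamma)}$, chosen so that for any policy the $H$-step truncated value differs from the infinite-horizon value by at most $\epsilon$ (the discarded tail is bounded by $\frac{\gamma^H}{1-\gamma}\le \epsilon$ and all rewards are nonnegative), and an escape threshold $q := \epsilon(1-\gamma) = \epsilon/v_{\max}$. For a timestep $t$, let $W_t$ denote the escape event that $\mathcal{A}$, following its greedy policy $\pi_t$ from $s_t$, encounters some $(s,a)\notin K_t$ within the next $H$ steps. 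I would call $t$ \emph{significant} if $\Pr[W_t] > q$.

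The first main ingredient is a generalized induced inequality: since $M$ and the known state-action \ac{mdp} $M_{K_t}$ have identical rewards and transitions on all of $K_t$, the two processes are indistinguishable until an escape occurs, so over the horizon $H$ one obtains $v^{\pi_t}_M(s_t) \ge v^{\pi_t}_{M_{K_t}}(s_t) - v_{\max}\Pr[W_t] - \epsilon$, the extra $\epsilon$ absorbing the horizon truncation on $M_{K_t}$. Chaining this with the three hypotheses then shows that every non-significant timestep is $4\epsilon$-optimal: applying the \textbf{accuracy} bound $v^{\pi_t}_{M_{K_t}}(s_t)\ge v_t(s_t)-\epsilon$ and then the \textbf{optimism} bound $v_t(s_t)\ge v^*_M(s_t)-\epsilon$ yields
\[
v^{\pi_t}_M(s_t) \;\ge\; v^*_M(s_t) - 3\epsilon - v_{\max}\Pr[W_t].
\]
When $t$ is not significant, $v_{\max}\Pr[W_t]\le v_{\max}\,q = \epsilon$, so $\mathcal{A}_t=\pi_t$ satisfies \eqref{epsilon-optimality} with $4\epsilon$ in place of $\epsilon$. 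Hence the only candidates for violating $4\epsilon$-optimality are the significant timesteps, and it remains to bound their number.

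The counting argument is the part I expect to be the main obstacle. The difficulty is that the escape events $W_t$ across overlapping horizon windows are neither independent nor identically distributed---their probabilities $\Pr[W_t]$ are themselves random, depending on the realized history and on the evolving set $K_t$---so I cannot simply multiply probabilities. The standard remedy is a martingale / Azuma--Hoeffding argument: along the actual trajectory, at each significant timestep the conditional probability of an escape within the next $H$ steps exceeds $q$, so the running count of realized escape events stochastically dominates a sum of independent Bernoulli$(q)$ trials. A Hoeffding-type tail bound then guarantees that, with probability at least $1-\delta$, once the number of (disjoint) significant windows exceeds a multiple of $\frac{1}{q}\ln\frac{1}{\delta}$ times the escape budget, strictly more escape events and value updates occur than the \textbf{complexity} hypothesis permits, namely $\zeta(\epsilon,\delta)$. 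This forces the number of significant timesteps to be at most $\mathcal{O}\!\big(\frac{H\,\zeta(\epsilon,\delta)}{q}\ln\frac{1}{\delta}\big)$.

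Finally I would substitute $H=\frac{1}{1-\gamma}\ln\frac{1}{\epsilon(1-\gamma)}$ and $q=\epsilon(1-\gamma)$ to recover the advertised count $\mathcal{O}\!\big(\frac{\zeta(\epsilon,\delta)}{\epsilon(1-\gamma)^2}\ln\frac{1}{\delta}\ln\frac{1}{\epsilon(1-\gamma)}\big)$, and close with a union bound: the three conditions hold simultaneously with probability $1-\delta$ while the Hoeffding concentration fails with probability at most $\delta$, so the whole conclusion holds with probability at least $1-2\delta$. The delicate points to get right are the exact constants in the horizon truncation and in the Hoeffding step, and ensuring that $K_t$ stays fixed within each window so that $M_{K_t}$ and the induced-inequality comparison are well defined throughout---both guaranteed by the hypothesis that $K_t$ changes only when a value update or an escape event occurs.
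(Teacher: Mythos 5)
The paper itself offers no proof of this theorem: it is imported verbatim from \cite{strehl2009}, and your proposal is essentially a reconstruction of the proof given in that reference (the general \textsc{pac}-\textsc{mdp} theorem of Strehl, Li and Littman) --- the same horizon $H=\frac{1}{1-\gamma}\ln\frac{1}{\epsilon(1-\gamma)}$, the same escape threshold $q=\epsilon(1-\gamma)$, the same generalized induced inequality, the same chaining of the \textbf{optimism} and \textbf{accuracy} hypotheses, and the same martingale/Azuma counting of ``significant'' timesteps against the budget $\zeta(\epsilon,\delta)$. So in approach you are aligned with the argument this paper implicitly relies on, and your constants (tail bound $\gamma^H/(1-\gamma)\le\epsilon$, threshold $q=\epsilon/v_{\max}$) are right.

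There is, however, one genuine gap. You define $W_t$ as the event of escaping $K_t$ while ``following its greedy policy $\pi_t$,'' and you later assert $\mathcal{A}_t=\pi_t$ at non-significant timesteps. But the $\epsilon$-optimality condition \eqref{epsilon-optimality} concerns $v^{\mathcal{A}_t}_M(s_t)$, the value of the \emph{non-stationary} policy the algorithm actually executes, and $\mathcal{A}_t$ coincides with $\pi_t$ only until the first $Q$-value update: an update can change the greedy action at states inside $K_t$ (and, by hypothesis, can change $K_t$ itself) without any escape having occurred, at which point the indistinguishability argument behind your induced inequality breaks down. The repair --- and this is how \cite{strehl2009} states it --- is to define $W_t$ as the event that within the next $H$ steps either an escape occurs \emph{or some $Q$-value is updated}. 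Everything downstream survives unchanged: the \textbf{complexity} hypothesis bounds precisely the number of update timesteps plus escape timesteps by $\zeta(\epsilon,\delta)$, so your counting step (which already charges both kinds of events against $\zeta$, somewhat inconsistently with your narrower definition of $W_t$) goes through with the enlarged event, and the final bound \eqref{learningcom} is unaffected.
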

\medskip


\section{DDQ Algorithm} 
\label{ddq}
This section presents Algorithm~\ref{alg:ddq}, the one we call \ac{ddq} and the main contribution of this paper. 
\textsc{Ddq} integrates elements of R-max and Delayed $Q$-learning, while preserving the implementation advantages of both.
We refer to the assignment in line $31$ of Algorithm~\ref{alg:ddq} as a \emph{type-$1$ update}, and to the one on line $52$ as a \emph{type-$2$ update}. 
Type-$1$ updates use the $m_1$ most recent experiences (occurances) of a state-action pair $(s,a)$ to update that pair's value, while a type-$2$ update is realized through a value iteration algorithm (lines $43-54$) and applies to state-action pairs experienced at least $m_2$ times. 
The outcome at timestep $t$ of the value iteration for a type-2 update is denoted $Q^\mathrm{vl}_t(s,a)$.
The value iteration is set to run for $\frac{\ln{(1/(\epsilon_2(1-\gamma)))}}{(1-\gamma)}$ iterations; parameter $\epsilon_2$ regulates the desired accuracy on the resulting estimate (Lemma~\ref{vi}).
A type-$1$ update is successful only if the condition on line $30$ of the algorithm is satisfied, and this condition ensures that the type-1 update necessarily decreases the value estimate by at least $\epsilon_1=3\epsilon_2$.
Similarly, a type-$2$ update is successful only if the condition on line $51$ of the algorithm holds. 
The \ac{ddq} algorithm maintains the following internal variables: \begin{itemize}[leftmargin=0.1in]
    \item $l(s,a)$:  the number of samples gathered for the update type-$1$ of $Q(s,a)$ once $l(s,a)=m_1$.
    \item $U(s,a)$:  the running sum of target values used for a type-$1$ update of $Q(s,a)$, once enough samples have been gathered.
    \item $b(s,a)$:  the timestep at which the most recent or ongoing collection of $m_1$ $(s,a)$ experiences has  started.
    \item $\mathsf{learn}(s,a)$: a Boolean flag that indicates whether or not samples are being gathered for type-$1$ update of $Q(s,a)$. The flag is set to $\mathrm{true}$ initially, and is reset to $\mathrm{true}$ whenever some Q-value is updated. 
    It flips to $\mathrm{false}$ when no updates to any Q-values occurs within a time window of $m_1$ experiences of $(s,a)$ in which  attempted updates type-$1$ of $Q^i(s,a)$ fail.
    \item $n(s,a)$: variable that keeps track of the number of times $(s,a)$ is experienced. \item $n(s,a,s')$: variable that keeps track of the number of transitions to $s'$ on action $a$ at state $s$.
    \item $r(s,a)$: the accumulated rewards by doing $a$ in $s$. 
\end{itemize}

The execution of the \ac{ddq} algorithm is tuned via the $m_1$ and $m_2$ parameters. 
One can practically reduce it to Delayed $Q$-learning by setting $m_2$ very large, and to R-max by setting $m_1$ large.
The next section provides a formal proof that \ac{ddq} is not only \ac{pac} but also \emph{possesses the minimum sample complexity} between R-max and Delayed $Q$-learning in the worst case ---often, it outperforms both.

\begin{algorithm}
\caption{The \ac{ddq} algorithm \label{alg:ddq}}
\begin{algorithmic}[1]
{\normalsize{
\State \textbf{Inputs}: $S,A,\gamma,m_1,m_2,\epsilon_1,\epsilon_2$
\For{\textbf{all} $s,a,s'$}
\State $Q(s,a) \gets v_\mathrm{max}$ \Comment{{\footnotesize{initialize $Q$ values to its maximum}}}
\State $U(s,a) \gets 0$ \Comment{{\footnotesize{used for attempted updates of type-$1$}}}
\State $l(s,a) \gets 0$ \Comment{{\footnotesize{counters}}}
\State $b(s,a) \gets 0$ \Comment{{\footnotesize{beginning timestep of attempted update type-$1$}}}
\State $\mathsf{learn}(s,a) \gets \mathrm{true}$ \Comment{{\footnotesize{learn flags}}}
\State $n(s,a) \gets 0$ \Comment{{\footnotesize{number of times $(s,a)$ is tried}}}
\State $n(s,a,s') \gets 0$ \Comment{{\footnotesize{number of transitions to $s'$ by $a$ in $s$}}}
\State $r(s,a) \gets 0$ \Comment{{\footnotesize{accumulated reward by execution of $a$ in $s$}}}
\EndFor
\State $t^{*} \gets 0$ \Comment{{\footnotesize{time of the most recent successful timestep}}}
\For{$t=1,2,3,...$}
\State let $s$ denotes the state at time $t$
\State choose action $a=\mathrm{arg} \max_{a' \in A}Q(s,a')$
\State observe immediate reward $r$ and next state $s'$
\State $n(s,a)=n(s,a)+1$  
\State $n(s,a,s')=n(s,a,s')+1$ 
\State $r(s,a)=r(s,a)+r$
\If{$b(s,a) \leq t^{*}$}
\State $\mathsf{learn}(s,a) \gets \mathrm{true}$
\EndIf
\If{$\mathsf{learn}(s,a)=\mathrm{true}$}
\If{$l(s,a)=0$}
\State $b(s,a) \gets t$
\EndIf
\State $l(s,a) \gets l(s,a)+1$
\State $U(s,a) \gets U(s,a)+r+\gamma \max_{a'}Q(s',a')$
\If{$l(s,a)=m_1$}
\If{$Q(s,a)-U(s,a)/m_1 \geq 2 \epsilon_1$}
\State $Q(s,a) \gets U(s,a)/m_1 + \epsilon_1$ \ and \ $t^{*} \gets t$ 
\ElsIf{$b(s,a) > t^{*}$}
\State $\mathsf{learn}(s,a) \gets \mathrm{false}$
\EndIf
\State $U(s,a) \gets 0$ \ and \ $l(s,a) \gets 0$
\EndIf
\EndIf
\If{$n(s,a)=m_2$}
\State $t^{*} \gets t$
\For{\textbf{all} $(\overline{s},\overline{a})$}
\State $Q_\mathrm{vl}(\overline{s},\overline{a}) \gets Q(\overline{s},\overline{a})$
\EndFor
\For{$i=1,2,3,...,(\frac{\ln{(1/(\epsilon_2(1-\gamma)))}}{(1-\gamma)})$}
\For{\textbf{all} $(\overline{s},\overline{a})$}
\If{$n(\overline{s},\overline{a}) \geq m_2$}
\State \begin{varwidth}[t]{\linewidth}  
$Q_\mathrm{vl}(\overline{s},\overline{a}) \gets \frac{r(\overline{s},\overline{a})}{n(\overline{s},\overline{a})} +  \gamma \sum_{s''} \frac{n(\overline{s},\overline{a},s'')}{n(\overline{s},\overline{a})} \max_{a'}Q_\mathrm{vl}(s'',a')$
\end{varwidth}
\EndIf
\EndFor
\EndFor
\For{\textbf{all} $(\overline{s},\overline{a})$}
\If{$Q_\mathrm{vl}(\overline{s},\overline{a}) \leq Q(\overline{s},\overline{a})$}
\State $Q(\overline{s},\overline{a}) \gets Q_\mathrm{vl}(\overline{s},\overline{a})$
\EndIf
\EndFor
\EndIf
\EndFor
}}
\end{algorithmic}
\end{algorithm}
\medskip


\section{PAC Analysis of DDQ Algorithm} 
\label{s}
In general, the sample complexity of R-max and Delayed $Q$-learning is incomparable~\cite{strehl2009}; the former is better in terms of the accuracy of the resulting policy while the latter is better in terms of scaling with the size of the state space.   
The sample complexity of R-max algorithm is $\frac{|S|^2|A|}{\epsilon^3(1-\gamma)^8}$ ---note the power on $\epsilon$; the sample complexity of Delayed $Q$-learning algorithm is $\frac{|S||A|}{\epsilon^4(1-\gamma)^8}$ ---note the linear scaling with $|S|$. 
It appears that \ac{ddq} can bring together the best of both worlds; its sample complexity is
\[
 \mathcal{O} \left( \min\left\{\mathcal{O}\big(\tfrac{|S|^2|A|}{\epsilon^3(1-\gamma)^8} \big), \mathcal{O}\big(\tfrac{|S||A|}{\epsilon^4(1-\gamma)^8} \big) \right\} \right)
\]

Before formally stating the \ac{pac} properties of the \ac{ddq} algorithm and proving the bound on its sample complexity, some technical groundwork needs to be laid. 
To slightly simplify notation, let $\kappa \triangleq |S||A|(1+\frac{1}{(1-\gamma)\epsilon_1})$. Moreover, subscript $t$ marks the value of a variable at the beginning of timestep $t$ (particularly line $23$ of the algorithm). 

\medskip
\begin{definition}
An event when $\mathsf{learn}(s,a)=\mathrm{true}$ and  at the same time $l(s,a)=m_1$ or $n(s,a)=m_2$, is called an \emph{attempted update}.
\end{definition}

\medskip
\begin{definition}
\label{definition:beta}
At any timestep $t$ in the execution of \ac{ddq} algorithm the set of \emph{known state-action pairs} is defined as:
\begin{equation*}
    K_t=\big\{ (s,a) \mid n(s,a) \geq m_2 \quad \mathrm{or} \quad
    Q_t(s,a)-\big(R(s,a)+\gamma \textstyle{\sum_{s'}T(s,a,s')v_t(s')\big)} \leq 3 \epsilon_1 \big\}
\end{equation*}
\end{definition}
\medskip

In subsequent analysis, and to distinguish between the conditions that make a state-action pair $(s,a)$ known, the set $K_t$ will be partitioned into two subsets:
\begin{align*}
    K^{1}_t =& \big\{ (s,a) \mid  Q_t(s,a)-\big(R(s,a)+\gamma \textstyle{\sum_{s'}T(s,a,s')v_t(s')\big)} \leq 3 \epsilon_1 \big\} \\
    K^{2}_t =& \big\{ (s,a) \mid n(s,a) \geq m_2 \big\}
\end{align*}

\medskip
\begin{definition}
\label{successful-t}
In the execution of \ac{ddq} algorithm a timestep $t$ is called a \emph{successful timestep} if at that step any state-action value is updated or the number of times that a state-action pair is visited reaches $m_2$. Moreover, considering a particular state-action pair $(s,a)$, timestep $t$ is called a \emph{successful timestep for $(s,a)$} if at $t$ either update type-1 happens to $Q(s,a)$ or the number of times that $(s,a)$ is visited reaches $m_2$.
\end{definition}

Recall that a type-$1$ update necessarily decreases the Q-value by at least $\epsilon_1$. 
Defining rewards as positive quantities prevents the Q-values from becoming negative. At the same time, state-action pairs can initiate update type-$2$ only once they are experienced $m_2$ times. Together, these conditions facilitate the establishment of an upper-bound on the total number of successful timesteps during the execution of \ac{ddq}:

\newtheorem{lemma}{Lemma}
\begin{lemma}
\label{sbound}
The number of successful timesteps for a particular state-action pair $(s,a)$ in a \ac{ddq} algorithm is at most $1+\frac{1}{(1-\gamma)\epsilon_1}$. Moreover, the total number of successful timesteps is bounded by $\kappa$.
\end{lemma}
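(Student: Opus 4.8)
The plan is to first bound the successful timesteps attributable to a single pair $(s,a)$ and then lift that to the global count by a disjointness argument. Two structural facts drive the per-pair bound, both essentially recalled in the text preceding the statement: each stored $Q$-value starts at $v_{\max}=\tfrac{1}{1-\gamma}$, never increases (lines $31$ and $52$ can only lower a $Q$-value), and, because rewards are non-negative, never drops below $0$. A short induction over the two update types confirms the non-negativity: a type-$1$ update sets $Q(s,a)\gets U(s,a)/m_1+\epsilon_1$ with $U$ a sum of non-negative targets $r+\gamma\max_{a'}Q(s',a')$, and a type-$2$ update replaces $Q$ by a value-iteration estimate assembled from non-negative rewards and current values.

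With that in place, I would count the timesteps that are successful for a fixed $(s,a)$. By the test on line $30$ and the assignment on line $31$, every successful type-$1$ update to $Q(s,a)$ lowers it by at least $\epsilon_1$. Since $Q(s,a)$ is non-increasing and trapped in $[0,v_{\max}]$, its total decrease across the whole run is at most $v_{\max}$, and as the type-$1$ updates alone each consume at least $\epsilon_1$ of that budget (interleaved type-$2$ decreases only tighten the count), there are at most $\tfrac{v_{\max}}{\epsilon_1}=\tfrac{1}{(1-\gamma)\epsilon_1}$ of them. The only other event making a timestep successful for $(s,a)$ is $n(s,a)=m_2$, which happens exactly once because $n(s,a)$ rises by one per visit. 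Summing gives the per-pair bound $1+\tfrac{1}{(1-\gamma)\epsilon_1}$.

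To globalize, I would show that each successful timestep is successful for the unique pair $(s,a)$ visited there. In any single step only one pair is experienced, so only its $Q$-value can undergo a type-$1$ update and only its counter can reach $m_2$; the delicate case is a step rendered successful purely by type-$2$ updates. Here I would trace the $t^{*}$ bookkeeping: the value-iteration block is entered (line $40$) only when $n(s,a)=m_2$ or $t=t^{*}$, and $t=t^{*}$ can hold at that test only because $t^{*}$ was just set to the current $t$ on line $31$ after a type-$1$ update to the current pair (any value of $t^{*}$ inherited from an earlier step is strictly smaller than $t$). Thus a type-$2$ update is always accompanied, in the same step, by a type-$1$ update to the current pair or by that pair reaching $m_2$ visits, so the step is successful for the current pair.

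Finally, since the pair visited at a timestep is unique, the collections of timesteps successful for distinct pairs are disjoint and their union contains all globally successful timesteps; summing the per-pair bound over the $|S||A|$ pairs yields $|S||A|\bigl(1+\tfrac{1}{(1-\gamma)\epsilon_1}\bigr)=\kappa$. The main obstacle I expect is the third paragraph: certifying, via the $t^{*}$ flag logic, that value iteration never fires on a timestep that is not already successful for the pair visited there. Everything in the first two paragraphs is routine arithmetic once the boundedness and monotonicity of the $Q$-values are pinned down.
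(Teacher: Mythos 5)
Your proof is correct and follows essentially the same argument as the paper's: each successful type-1 update consumes at least $\epsilon_1$ of the total budget $v_{\max}=\tfrac{1}{1-\gamma}$ available to a non-increasing, non-negative $Q(s,a)$, the $n(s,a)=m_2$ event occurs at most once per pair, and the global bound $\kappa$ follows by summing over the $|S||A|$ pairs. The only notable difference is that your third paragraph makes explicit the $t^{*}$ bookkeeping showing that type-2 (value-iteration) updates can fire only at timesteps already successful for the currently visited pair --- a point the paper's proof leaves implicit when it multiplies the per-pair bound by $|S||A|$.
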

\begin{proof}
See Appendix~\ref{L1}.
\end{proof}

\begin{lemma}
\label{ubound} 
The total number of attempted updates in \ac{ddq} algorithm is bounded by $|S||A|(1+\kappa)$.
\end{lemma}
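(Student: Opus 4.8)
The plan is to reduce the global count to a per-pair count: I would show that for every fixed $(s,a)$ the number of attempted updates is at most $1+\kappa$, and then sum over the $|S||A|$ pairs. Two structural facts set this up. First, for a fixed $(s,a)$ the collection windows $[b(s,a),t]$ that trigger type-$1$ attempted updates are pairwise disjoint and strictly time-ordered: every attempted update resets $l(s,a)\gets 0$ (line~$35$), and a new window can only open at a strictly later experience of $(s,a)$ (lines~$24$--$25$), so the window openings $b(s,a)$ form a strictly increasing sequence $b_1<b_2<\cdots$. Second, the type-$2$ attempted update (the event $n(s,a)=m_2$ with $\mathsf{learn}(s,a)=\mathrm{true}$) can fire at most once per pair, since $n(s,a)$ reaches $m_2$ exactly once, and by Definition~\ref{successful-t} that very event is a successful timestep.

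The central step I would prove is an injective charging: every attempted update of $(s,a)$ except the first is assigned a distinct global successful timestep. The key reading of the algorithm is that the negation of the guard on line~$32$, namely $b(s,a)\le t^{*}$, holds exactly when some successful timestep occurred in $M$ during the current window $[b(s,a),t]$, because $t^{*}$ records the time of the most recent successful timestep. I would argue by cases on the outcome of the preceding attempted update $A_{i-1}$ (at time $t_{i-1}$, window $[b_{i-1},t_{i-1}]$): if $A_{i-1}$ succeeded, then $t^{*}$ was set to $t_{i-1}$ (line~$31$); if $A_{i-1}$ failed with $b_{i-1}\le t^{*}$, a successful timestep lies inside $[b_{i-1},t_{i-1}]$; and if $A_{i-1}$ failed with $b_{i-1}>t^{*}$, then $\mathsf{learn}(s,a)$ was turned off (line~$33$) and can be turned back on (lines~$20$--$21$) only after $t^{*}$ advances past the frozen value $b_{i-1}$, which again forces a fresh successful timestep before the next window opens at $b_i$. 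In each case the timestep charged to $A_i$ lies in the half-open interval $[b_{i-1},b_i)$; since the $b_i$ strictly increase, these intervals are disjoint and the charging is injective.

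Granting the claim, the attempted updates of $(s,a)$ outnumber the successful timesteps they are charged to by at most one, so their count is at most $1+\kappa$ using the total bound of $\kappa$ on successful timesteps from Lemma~\ref{sbound}; the single type-$2$ attempted update is absorbed because it coincides with a successful timestep and is covered by the same scheme. Summing $1+\kappa$ over all $|S||A|$ pairs yields the claimed bound $|S||A|(1+\kappa)$.

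The hard part will be the bookkeeping in the charging argument, and in particular the delicate case in which $A_{i-1}$ is a \emph{successful} update: there $t^{*}$ is reset to $t_{i-1}$ and the subsequent attempt $A_i$ may fail with no further successful timestep in between, so the successful timestep at $t_{i-1}$ must be charged to $A_i$ rather than to $A_{i-1}$. Pinning down the interval endpoints so that consecutive charged intervals $[b_{i-1},b_i)$ provably do not overlap---even at the boundary timestep where $\mathsf{learn}(s,a)$ is reset and a new window opens at the same step---and checking that the lone type-$2$ event slots into this injection without collision, is where the proof must be made rigorous; the remaining summation is routine.
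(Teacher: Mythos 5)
Your proposal is correct and takes essentially the same route as the paper: a per-pair bound of $1+\kappa$ obtained by interlacing attempted updates of $(s,a)$ with globally counted successful timesteps (Lemma~\ref{sbound}), then summing over all $|S||A|$ pairs. Your injective charging argument, with its case analysis on the guard $b(s,a) > t^{*}$, is a rigorous rendering of what the paper asserts informally in two sentences---including the boundary cases (previous attempt successful, or failed with $\mathsf{learn}(s,a)$ still $\mathrm{true}$) where the charged successful timestep lies inside or at the end of the previous collection window rather than strictly between the two attempts.
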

\begin{proof}
See Appendix~\ref{L2}.
\end{proof}

\begin{lemma}
\label{a1}
Let $M$ be an \ac{mdp} with a set of known state-action pairs $K_t$. 
If we assume that for all state-action pairs $(s,a) \notin K_t$ we have $Q_t(s,a) \leq \frac{1}{1-\gamma}$, then for all state-action pairs in the known state-action \ac{mdp} $M_{K_t}$ it holds
\begin{equation*}
Q^{*}_{M_{K_t}}(s,a) \leq \frac{1}{1-\gamma}
\end{equation*}
\end{lemma}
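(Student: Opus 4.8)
The plan is to reduce the claim to the elementary fact that in any discounted \ac{mdp} whose one-step rewards are all bounded above by $1$, the optimal state-action value cannot exceed $\frac{1}{1-\gamma}$. The entire content of the lemma therefore lies in checking that every reward appearing in the modified \ac{mdp} $M_{K_t}$ respects this bound, and the hypothesis $Q_t(s,a)\le\frac{1}{1-\gamma}$ on unknown pairs is precisely what guarantees this.

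First I would enumerate the reward types in $M_{K_t}$ according to its defining construction, of which there are three. For a known pair $(s,a)\in K_t$ the reward is $R_{K_t}(s,a)=R(s,a)\in[0,1]$, hence at most $1$ with no appeal to the hypothesis. For an unknown pair $(s,a)\notin K_t$, the reward collected upon experiencing it is $R_{K_t}(s,a)=Q_t(s,a)(1-\gamma)$; invoking the assumption $Q_t(s,a)\le\frac{1}{1-\gamma}$ yields $R_{K_t}(s,a)\le 1$. Finally, for each auxiliary self-looping state $z_{s,a}$ the reward is $R_{K_t}(z_{s,a},\cdot)=Q_t(s,a)(1-\gamma)\le 1$ by the same inequality. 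Since rewards are nonnegative by construction, this establishes $0\le R_{K_t}(\cdot,\cdot)\le 1$ uniformly over all of $M_{K_t}$.

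Second, I would bound the optimal value by the worst-case geometric sum. Writing $Q^{*}_{M_{K_t}}(s,a)$ as the expected discounted return that starts by taking $a$ at $s$ and thereafter follows the optimal policy of $M_{K_t}$, every term $\gamma^{k}R_{K_t}(\cdot,\cdot)$ in the infinite sum is at most $\gamma^{k}$. Hence $Q^{*}_{M_{K_t}}(s,a)\le\sum_{k=0}^{\infty}\gamma^{k}=\frac{1}{1-\gamma}$, and this holds for every state-action pair of $M_{K_t}$, including those based at the auxiliary states $z_{s,a}$, which completes the argument.

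I do not anticipate a genuine obstacle; the only point requiring care is bookkeeping, namely ensuring that no reward type in the construction of $M_{K_t}$ is overlooked. In particular, one must treat the reward assigned on the escape transition into $z_{s,a}$ and the subsequent self-loop reward at $z_{s,a}$ separately, since both equal $Q_t(s,a)(1-\gamma)$ and both rely on the hypothesis; the known-pair rewards are the only ones bounded without invoking the assumption. Once the uniform reward bound is in hand, the value bound is immediate.
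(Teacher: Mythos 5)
Your proof is correct, but it follows a genuinely different route from the paper's. You reduce the lemma to the elementary fact that a discounted \textsc{mdp} whose one-step rewards all lie in $[0,1]$ has every (optimal) state-action value bounded by $\sum_{k\ge 0}\gamma^{k}=\frac{1}{1-\gamma}$, and the only work is the bookkeeping check that each reward type of $M_{K_t}$ respects this bound: $R_{K_t}(s,a)=R(s,a)\le 1$ for known pairs, and $R_{K_t}(s,a)=R_{K_t}(z_{s,a},\cdot)=Q_t(s,a)(1-\gamma)\le 1$ for unknown pairs and their auxiliary states, the latter precisely by the hypothesis. The paper instead argues at the level of values: it takes the maximizing pair $(s^{*},a^{*})$ of $Q^{*}_{M_{K_t}}$ and splits cases. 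If $(s^{*},a^{*})\notin K_t$ the value of that pair in $M_{K_t}$ equals $Q_t(s^{*},a^{*})$, which is bounded by hypothesis; if $(s^{*},a^{*})\in K_t$, the Bellman optimality equation together with $R\le 1$ and the fact that every next-state value is at most the maximum gives the self-bounding inequality $Q^{*}_{M_{K_t}}(s^{*},a^{*})\le 1+\gamma\, Q^{*}_{M_{K_t}}(s^{*},a^{*})$, hence the bound. Your argument is more modular and arguably more transparent: it isolates the one property of the $M_{K_t}$ construction that matters (reward normalization) and then invokes the universal bound $v_{\max}$ already noted in the paper's preliminaries, with no need for the Bellman equation or a case analysis at the maximizer. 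The paper's fixed-point style argument, on the other hand, never has to unfold trajectories or compute the value of the absorbing construction at $z_{s,a}$; it stays entirely inside the value-function formalism, which is the same device reused later in the paper (e.g., the $\alpha$-contraction arguments in Lemmas~\ref{a2} and \ref{lubk2}). Both are complete; neither has a gap.
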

\begin{proof}
See Appendix~\ref{L3}.
\end{proof}

Choosing $m_1$ big enough and applying Hoefding's inequality allows following conclusion (Lemma~\ref{type1opt}) for all type-$1$ updates, and paves the way for establishing the optimism condition of  Theorem~\ref{pac}.

\medskip
\begin{lemma}
\label{type1opt}
Suppose that at time $t$ during the execution of \ac{ddq} a state-action pair $(s,a)$ experiences a successful update of type-1 with its value changing from $Q(s,a)$ to $Q'(s,a)$, and that there exists $\exists \epsilon_2 \in (0,\tfrac{\epsilon_1}{2})$ such that $\forall s \in S$ and $\forall t'<t$, $v_{t'}(s) \geq v^{*}_M(s)-2\epsilon_2$.
If 
\begin{equation}
\label{m1}
    m_1 \geq \frac{\ln{\big(\tfrac{8|S||A|(1+\kappa)}{\delta}\big)}}{2 (\epsilon_1-2\epsilon_2)^2 (1-\gamma)^2} \simeq \mathcal{O} \left( \frac{\ln{\big(\tfrac{|S|^2|A|^2}{\delta}\big)}}{\epsilon^2_1 (1-\gamma)^2} \right)
\end{equation}
for $\kappa = |S||A|(1+\sfrac{1}{(1-\gamma)\epsilon_1})$, then $Q'(s,a) \geq Q^{*}_M(s,a)$ with probability at least $1-\frac{\delta}{8}$.
\end{lemma}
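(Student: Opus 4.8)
The plan is to unfold the type-1 update rule and recognise the post-update value as an empirical mean plus the $\epsilon_1$ bonus, then show that this mean concentrates above $Q^{*}_M(s,a)-\epsilon_1$. A successful type-1 update (line $31$) sets $Q'(s,a) = U(s,a)/m_1 + \epsilon_1$, where $U(s,a)/m_1 = \tfrac{1}{m_1}\sum_{k=1}^{m_1}\big(r_k + \gamma\,v_{t_k}(s'_k)\big)$ is the average over the $m_1$ most recent experiences of $(s,a)$; here $r_k = R(s,a)$ is the (deterministic) reward, $s'_k$ is the next state sampled on the $k$-th experience, and $v_{t_k}$ is the value estimate in force when that experience occurred, so $t_k < t$. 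Establishing $Q'(s,a)\geq Q^{*}_M(s,a)$ is therefore equivalent to proving $\tfrac{1}{m_1}\sum_k\big(r_k+\gamma v_{t_k}(s'_k)\big)\geq Q^{*}_M(s,a)-\epsilon_1$.

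The crux of the difficulty is that the summands $r_k+\gamma v_{t_k}(s'_k)$ are not i.i.d.: each $v_{t_k}$ is a history-dependent random function, reshaped by updates to other state-action pairs that may fall between successive experiences of $(s,a)$. To remove this dependence I would invoke the hypothesis $v_{t'}(s)\geq v^{*}_M(s)-2\epsilon_2$ (valid for every $t'<t$, hence for each $t_k$) pointwise, obtaining
\[
\frac{1}{m_1}\sum_k\big(r_k+\gamma v_{t_k}(s'_k)\big)\;\geq\;\frac{1}{m_1}\sum_k\big(R(s,a)+\gamma v^{*}_M(s'_k)\big)-2\gamma\epsilon_2\;\geq\;\frac{1}{m_1}\sum_k Z_k-2\epsilon_2,
\]
where $Z_k\triangleq R(s,a)+\gamma v^{*}_M(s'_k)$. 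Since $v^{*}_M$ is a \emph{fixed} function and, by the Markov property, the next states $s'_k$ are independent draws from $T(s,a,\cdot)$, the $Z_k$ are i.i.d.\ with mean $\mathbb{E}[Z_k]=R(s,a)+\gamma\sum_{s'}T(s,a,s')v^{*}_M(s')=Q^{*}_M(s,a)$ and range contained in $[0,v_{\max}]=[0,\tfrac{1}{1-\gamma}]$.

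I would then apply Hoeffding's inequality to the i.i.d.\ sample mean with deviation $\eta=\epsilon_1-2\epsilon_2>0$ (positive because $\epsilon_2<\epsilon_1/2$), giving $\Pr\big(\tfrac{1}{m_1}\sum_k Z_k < Q^{*}_M(s,a)-(\epsilon_1-2\epsilon_2)\big)\leq \exp\!\big(-2m_1(\epsilon_1-2\epsilon_2)^2(1-\gamma)^2\big)$. On the complementary event, chaining back through the previous display yields $\tfrac{1}{m_1}\sum_k\big(r_k+\gamma v_{t_k}(s'_k)\big)\geq Q^{*}_M(s,a)-(\epsilon_1-2\epsilon_2)-2\epsilon_2=Q^{*}_M(s,a)-\epsilon_1$, so $Q'(s,a)\geq Q^{*}_M(s,a)$.

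Finally I would turn the per-update estimate into the claimed bound. Any given type-1 update is one of at most $|S||A|(1+\kappa)$ attempted updates by Lemma~\ref{ubound}, so I would force the per-update failure probability below $\tfrac{\delta}{8|S||A|(1+\kappa)}$; solving $\exp\!\big(-2m_1(\epsilon_1-2\epsilon_2)^2(1-\gamma)^2\big)\leq \tfrac{\delta}{8|S||A|(1+\kappa)}$ for $m_1$ reproduces exactly the bound \eqref{m1}, and a union bound over all attempted updates caps the total failure probability at $\delta/8$. The only delicate point beyond this bookkeeping is the i.i.d.\ claim for the $s'_k$ when the sampling times $t_k$ are themselves random stopping times; I would justify it through the strong Markov property, exactly as in the Delayed $Q$-learning analysis of~\cite{strehl2006}.
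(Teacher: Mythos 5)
Your proposal is correct and follows essentially the same route as the paper's own proof: unfold the type-1 update, use the optimism hypothesis $v_{t_k}(s) \geq v^{*}_M(s) - 2\epsilon_2$ to pass from the history-dependent estimates to the fixed function $v^{*}_M$, apply Hoeffding to the resulting i.i.d.\ variables with mean $Q^{*}_M(s,a)$, and spread the failure probability over the at most $|S||A|(1+\kappa)$ attempted updates from Lemma~\ref{ubound}. The only (cosmetic) differences are that you use a clean union bound where the paper bounds a product of per-update success probabilities, and that you explicitly flag the stopping-time subtlety behind the i.i.d.\ claim, which the paper leaves implicit.
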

\begin{proof}
In Appendix~\ref{L4}.
\end{proof}

The following two lemmas are borrowed from \cite{strehl2009} with very minor modifications, and inform on how to choose parameter $m_2$, and the number of iterations for the value iteration part of the \ac{ddq} algorithm in order to obtain a desired accuracy.

\medskip
\begin{lemma} (cf. \cite[Proposition~4]{strehl2009})
\label{vi}
Suppose the value-iteration algorithm runs on \ac{mdp} $M$ for $\frac{\ln{(\sfrac{1}{\epsilon_2(1-\gamma)})}}{1-\gamma}$ iterations, and each state-action value estimate $Q(s,a)$ is initialized to some value between $0$ and $v_\mathrm{max}$ for all states and actions. Let $Q'(s,a)$ be the state-action value estimate the algorithm yields. Then
$
    \max_{s,a} \big\{ |Q'(s,a)-Q^{*}_M(s,a)| \big\} \leq \epsilon_2
$.
\end{lemma}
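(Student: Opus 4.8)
The plan is to treat the value-iteration sweeps (lines $43$--$54$ of Algorithm~\ref{alg:ddq}) as repeated applications of the Bellman optimality operator and to invoke its contraction property in the sup-norm. First I would define the operator $\mathcal{T}$ acting on a table of state-action values by $(\mathcal{T}Q)(s,a) = R(s,a) + \gamma \sum_{s'} T(s,a,s') \max_{a'} Q(s',a')$, observe that one pass of the inner loop realizes exactly $\mathcal{T}$, and recall that $Q^{*}_M$ is its unique fixed point (the Bellman equation of Section~\ref{pre}). The core fact I would establish, or cite as standard, is that $\mathcal{T}$ is a $\gamma$-contraction with respect to $\|\cdot\|_\infty$: for any two tables $Q,\tilde{Q}$ one has $\|\mathcal{T}Q - \mathcal{T}\tilde{Q}\|_\infty \leq \gamma\,\|Q - \tilde{Q}\|_\infty$. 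This follows because $\max_{a'}$ is $1$-Lipschitz in the sup-norm and $\sum_{s'} T(s,a,s')$ forms a convex combination, so the per-state differences cannot be amplified beyond the factor $\gamma$.

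Second, writing $Q_0$ for the initialization and $Q_n$ for the table after $n$ iterations, iterating the contraction yields $\|Q_n - Q^{*}_M\|_\infty \leq \gamma^{n}\,\|Q_0 - Q^{*}_M\|_\infty$. Since the lemma assumes $Q_0(s,a)\in[0,v_\mathrm{max}]$ and the optimal values likewise satisfy $0 \leq Q^{*}_M(s,a) \leq v_\mathrm{max} = \tfrac{1}{1-\gamma}$, the initial error obeys $\|Q_0 - Q^{*}_M\|_\infty \leq \tfrac{1}{1-\gamma}$, giving $\|Q_n - Q^{*}_M\|_\infty \leq \tfrac{\gamma^{n}}{1-\gamma}$.

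Finally I would verify that the prescribed count $n = \tfrac{\ln(1/(\epsilon_2(1-\gamma)))}{1-\gamma}$ drives this bound below $\epsilon_2$. Requiring $\tfrac{\gamma^{n}}{1-\gamma} \leq \epsilon_2$ is equivalent to $n \geq \tfrac{\ln(1/(\epsilon_2(1-\gamma)))}{\ln(1/\gamma)}$, so the only non-mechanical step is to compare this exact requirement with the $n$ supplied in the statement. I would resolve it with the elementary inequality $\ln(1/\gamma) \geq 1-\gamma$ (equivalently $\ln\gamma \leq \gamma-1$), valid for $\gamma\in(0,1)$, which gives $\tfrac{\ln(1/(\epsilon_2(1-\gamma)))}{\ln(1/\gamma)} \leq \tfrac{\ln(1/(\epsilon_2(1-\gamma)))}{1-\gamma} = n$ (the numerator being positive since $\epsilon_2(1-\gamma)<1$). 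Hence the chosen number of sweeps is at least what is actually needed, and the claimed accuracy $\max_{s,a}|Q'(s,a) - Q^{*}_M(s,a)| \leq \epsilon_2$ follows. As this is the classical geometric-convergence estimate for value iteration, I anticipate no substantial obstacle beyond that last logarithm comparison.
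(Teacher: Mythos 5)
Your proof is correct, but there is nothing in the paper to compare it against: the paper states Lemma~\ref{vi} as borrowed material and defers its proof entirely to Proposition~4 of \cite{strehl2009}, offering no in-text argument. Your contraction argument is precisely the standard proof underlying that cited proposition, and it makes the lemma self-contained: the Bellman optimality operator $\mathcal{T}$ is a $\gamma$-contraction in $\|\cdot\|_\infty$ with unique fixed point $Q^{*}_M$; the initialization in $[0,v_\mathrm{max}]$ together with $0 \leq Q^{*}_M(s,a) \leq v_\mathrm{max} = \tfrac{1}{1-\gamma}$ bounds the initial error by $\tfrac{1}{1-\gamma}$; and the elementary inequality $\ln(1/\gamma) \geq 1-\gamma$ (equivalently, $\gamma^{n} \leq e^{-n(1-\gamma)}$) shows the prescribed sweep count dominates the exact requirement $\ln\big(1/(\epsilon_2(1-\gamma))\big)/\ln(1/\gamma)$ --- this last step is exactly where the horizon factor $\tfrac{1}{1-\gamma}$, rather than $\tfrac{1}{\ln(1/\gamma)}$, in the lemma's iteration count comes from. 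Two small points are worth making explicit if your argument is to stand in place of the citation: the iteration count should be read as its ceiling (harmless, since your error bound decreases in $n$), and the positivity of $\ln\big(1/(\epsilon_2(1-\gamma))\big)$ that your final comparison needs does hold under the paper's parameter choices, where $\epsilon_2 = \epsilon_1/3$ and $\epsilon_1 = (1-\gamma)\epsilon/3$ with $\epsilon < \tfrac{1}{1-\gamma}$, giving $\epsilon_2(1-\gamma) < \tfrac{1}{9}$. Note also that the paper applies this lemma to the empirical model $\hat{M}_{K^{2}_t}$ (Lemma~\ref{lubk2}), which is legitimate precisely because your statement and proof are generic in the \ac{mdp} on which value iteration runs.
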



\begin{lemma}
\label{estimation}
Consider an \ac{mdp} $M$ with reward function $R$ and transition probabilities $T$. 
Suppose another \ac{mdp} $\hat{M}$ has the same state and action set as $M$, but maintains an \ac{ml} estimate of $R$ and $T$, with $n(s,a) \geq m_2$, in the form of $\hat{R}$ and $\hat{T}$ respectively.
With $C$ a constant and for all state-action pairs, choosing
\begin{equation*}
    m_2 \geq C\left(\frac{|S|+\ln{(\sfrac{8|S||A|}{\delta})}}{\epsilon^2_2(1-\gamma)^4}\right) \simeq O\left(\frac{|S|+\ln{(\sfrac{|S||A|}{\delta})}}{\epsilon^2_2(1-\gamma)^4}\right)
\end{equation*}
guarantees
\begin{align*}
    |R(s,a)-\hat{R}(s,a)| &\leq C \epsilon_2 (1-\gamma)^2\\
    \|T(s,a,\cdot)-\hat{T}(s,a,\cdot)\|_1 & \leq C \epsilon_2 (1-\gamma)^2
\end{align*}
with probability at least $1-\frac{\delta}{8}$. 
Moreover, for any policy $\pi$ and for all state-action pairs, 
\begin{align*}
    |Q^{\pi}_M(s,a)-Q^{\pi}_{\hat{M}}(s,a)| & \leq \epsilon_2 \\
    |v^{\pi}_M(s)-v^{\pi}_{\hat{M}}(s)| & \leq \epsilon_2
\end{align*}
with probability at least $1-\frac{\delta}{8}$.
\end{lemma}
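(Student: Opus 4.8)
The plan is to prove the statement in two stages: first establish concentration of the maximum-likelihood estimates $\hat{R}$ and $\hat{T}$ around the true parameters $R$ and $T$ under the sampling condition $n(s,a)\geq m_2$, and then propagate that parameter accuracy into value accuracy via a simulation-lemma argument built on the Bellman equations for a fixed policy.

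For the first stage, I would fix a state-action pair $(s,a)$ with $n(s,a)\geq m_2$. The reward estimate $\hat{R}(s,a)=r(s,a)/n(s,a)$ is an empirical mean of at least $m_2$ independent samples lying in $[0,1]$, so Hoeffding's inequality yields $|R(s,a)-\hat{R}(s,a)|\leq C\epsilon_2(1-\gamma)^2$ with high probability once $m_2$ exceeds a bound of order $\ln(1/\delta')/(\epsilon_2^2(1-\gamma)^4)$. For the transitions, $\hat{T}(s,a,\cdot)$ is the empirical distribution of at least $m_2$ independent draws over the $|S|$ possible successor states, so I would invoke the standard $\ell_1$ concentration bound for multinomial distributions, of the form $\Pr\big[\|T(s,a,\cdot)-\hat{T}(s,a,\cdot)\|_1\geq\eta\big]\leq 2^{|S|}e^{-m_2\eta^2/2}$. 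Solving for $\eta=C\epsilon_2(1-\gamma)^2$ is precisely what forces the additive $|S|$ term (rather than $\ln|S|$) into the lower bound on $m_2$. Taking $\delta'=\delta/(8|S||A|)$ and applying a union bound over all state-action pairs produces the $\ln(8|S||A|/\delta)$ term and delivers both parameter bounds simultaneously with probability at least $1-\frac{\delta}{8}$.

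For the second stage, I would fix an arbitrary policy $\pi$ and subtract the Bellman equations for $Q^{\pi}_M$ and $Q^{\pi}_{\hat{M}}$. Writing $\Delta_Q=\max_{s,a}|Q^{\pi}_M(s,a)-Q^{\pi}_{\hat{M}}(s,a)|$ and $\Delta_v=\max_s|v^{\pi}_M(s)-v^{\pi}_{\hat{M}}(s)|$, and noting $\Delta_v\leq\Delta_Q$ since $v^{\pi}(s)=Q^{\pi}(s,\pi(s))$, I would split the transition term as $\sum_{s'}T(s,a,s')\big(v^{\pi}_M(s')-v^{\pi}_{\hat{M}}(s')\big)+\sum_{s'}\big(T(s,a,s')-\hat{T}(s,a,s')\big)v^{\pi}_{\hat{M}}(s')$ and bound the second piece by $\|T(s,a,\cdot)-\hat{T}(s,a,\cdot)\|_1\,v_{\max}$. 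This produces a contraction inequality of the form $\Delta_Q\leq\alpha+\gamma\Delta_Q+\gamma\beta/(1-\gamma)$, with $\alpha$ and $\beta$ the reward and transition errors from the first stage, which rearranges to $\Delta_Q\leq\big(\alpha+\gamma\beta/(1-\gamma)\big)/(1-\gamma)$; substituting $\alpha,\beta\leq C\epsilon_2(1-\gamma)^2$ and fixing the absolute constant $C$ appropriately leaves a clean $\Delta_Q\leq\epsilon_2$, and hence $\Delta_v\leq\epsilon_2$ as well.

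The main obstacle is the $\ell_1$ concentration of the empirical transition distribution: this is what determines the characteristic $|S|/(\epsilon_2^2(1-\gamma)^4)$ scaling of $m_2$ and is the reason the sample complexity of model-based learning carries the extra $|S|$ factor relative to the reward estimate alone. Beyond that, the only delicate point is bookkeeping the constant $C$ consistently through both stages --- ensuring that the $(1-\gamma)^2$ slack deliberately built into the parameter bounds is exactly what gets consumed by the $1/(1-\gamma)$ amplification in the value-difference recursion, so that the transition error (which needs the full $(1-\gamma)^2$) and the reward error (which only needs $(1-\gamma)$) both collapse to the target $\epsilon_2$ after division by $(1-\gamma)$.
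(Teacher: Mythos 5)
Your proposal is correct and matches the paper's approach: the paper proves this lemma simply by citing Lemmas~12--15 of \cite{strehl2009}, and those lemmas are exactly what you reconstruct --- Hoeffding for the empirical rewards, the Weissman-style $\ell_1$ concentration bound (whose $2^{|S|}$ factor produces the additive $|S|$ in $m_2$), a union bound over state-action pairs, and a simulation-lemma contraction argument to convert the $C\epsilon_2(1-\gamma)^2$ parameter accuracy into $\epsilon_2$ value accuracy. The only caveat, which you already flag yourself, is the bookkeeping of the constant $C$ so that the $1/(1-\gamma)$ and $1/(1-\gamma)^2$ amplification in the recursion is exactly absorbed; this is handled the same way in the cited reference.
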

\begin{proof}
Combine \cite[Lemmas~12--15]{strehl2009}.
\end{proof}

\begin{lemma}
\label{a2}
Let $t_1 < t_2$ be two timesteps during the execution of the \ac{ddq} algorithm. 
If 
\[
Q_{t_1}(s,a) \geq Q^{*}_{M_{K^{2}_{t_1}}}(s,a) - 2\epsilon_2 \enspace \forall (s,a) \in S \times A
\]
then with probability at least $1-\frac{\delta}{8}$
\[
  Q^{*}_{M_{K^{2}_{t_1}}}(s,a) \geq Q^{*}_{M_{K^{2}_{t_2}}}(s,a) \enspace \forall (s,a) \in S \times A  
\]
\end{lemma}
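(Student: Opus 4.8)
The plan is to exploit the monotonicity of the known-state-action construction as the known set grows. First I would record the structural facts that drive everything. Since each visit count $n(s,a)$ is non-decreasing in $t$ and $t_1<t_2$, the visit-count-based known sets nest, $K^2_{t_1}\subseteq K^2_{t_2}$; and since every stored value in \ac{ddq} starts at $v_\mathrm{max}$ and is only ever lowered (a type-1 update on line $31$ replaces $Q(s,a)$ by $U(s,a)/m_1+\epsilon_1$ only when this is smaller by at least $\epsilon_1$, and a type-2 update on line $53$ takes a minimum with the value-iteration output), the stored values are non-increasing, so $Q_{t_2}(s,a)\le Q_{t_1}(s,a)$ for all $(s,a)$. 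I would then fold the absorbing states $z_{s,a}$ out of the picture: for any visit-count known set $K$, the optimal value of $M_K$ restricted to $S\times A$ is the unique fixed point of the operator that applies the true Bellman backup on pairs in $K$ and returns the frozen constant $Q_t(s,a)$ on pairs outside $K$ (an unknown pair yields reward $Q_t(s,a)(1-\gamma)$ and then self-loops, giving value exactly $Q_t(s,a)$). Call these operators $\mathcal{T}_1$ and $\mathcal{T}_2$, with fixed points $Q_1:=Q^{*}_{M_{K^{2}_{t_1}}}$ and $Q_2:=Q^{*}_{M_{K^{2}_{t_2}}}$; both are monotone $\gamma$-contractions in the sup-norm.

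The reduction I would use next is the standard super-solution argument: it suffices to prove $Q_1\ge \mathcal{T}_2 Q_1$ pointwise, because $\mathcal{T}_2$ is monotone with unique fixed point $Q_2$, so iterating gives $Q_1\ge\mathcal{T}_2 Q_1\ge\mathcal{T}_2^2 Q_1\ge\cdots\to Q_2$, whence $Q_1\ge Q_2$. I would verify $Q_1\ge\mathcal{T}_2 Q_1$ on the three classes of pairs induced by the nesting. On \emph{type-A} pairs $(s,a)\in K^{2}_{t_1}$, both operators use the true reward and transition and $Q_1$ is $\mathcal{T}_1$-fixed, so $Q_1(s,a)=[\mathcal{T}_2 Q_1](s,a)$ with equality. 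On \emph{type-C} pairs $(s,a)\notin K^{2}_{t_2}$, the inequality collapses to $Q_{t_1}(s,a)\ge Q_{t_2}(s,a)$, which is precisely the non-increase of stored values noted above.

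The whole weight of the lemma sits on the \emph{type-B} pairs $(s,a)\in K^{2}_{t_2}\setminus K^{2}_{t_1}$ --- pairs that are leaves, frozen at $Q_{t_1}(s,a)$, in the smaller known \ac{mdp} but internal, backed up through the true model, in the larger one. Here I must show $Q_{t_1}(s,a)\ge R(s,a)+\gamma\sum_{s'}T(s,a,s')\,v^{*}_{M_{K^{2}_{t_1}}}(s')$. The plan is to convert the right-hand side into a quantity controlled by the stored values and the hypothesis: the assumption $Q_{t_1}\ge Q^{*}_{M_{K^{2}_{t_1}}}-2\epsilon_2$ yields $v^{*}_{M_{K^{2}_{t_1}}}(s')\le v_{t_1}(s')+2\epsilon_2$, which bounds the backup by the \emph{true} one-step backup of the stored value function $v_{t_1}$ plus $2\gamma\epsilon_2$. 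It then remains to argue that $Q_{t_1}(s,a)$ stays optimistic relative to that backup, using that the pair has not yet undergone a type-2 update at $t_1$ while every update that has touched it retains an $\epsilon_1$ cushion, and invoking Lemma~\ref{estimation} to replace the empirical reward and transition that the algorithm actually manipulates by the true model up to an $\epsilon_2$ error. Because Lemma~\ref{estimation} holds only with probability at least $1-\tfrac{\delta}{8}$, this is exactly where the probabilistic qualifier in the statement enters, and the slack $\epsilon_2$ (together with $\epsilon_1=3\epsilon_2$) is what absorbs the estimation error so that the inequality closes. I expect this type-B bound to be the main obstacle; the remaining assembly --- collecting the three cases into $Q_1\ge\mathcal{T}_2 Q_1$ and running the monotone-contraction iteration to conclude $Q_1\ge Q_2$ --- is routine.
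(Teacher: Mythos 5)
Your skeleton matches the paper's proof almost exactly: the nesting $K^2_{t_1}\subseteq K^2_{t_2}$, the monotone decrease of stored values, the three-way case split (pairs in $K^2_{t_1}$, pairs outside $K^2_{t_2}$, and the crux pairs in $K^2_{t_2}\setminus K^2_{t_1}$), and the final monotone-contraction/super-solution step (the paper phrases it as a contradiction on $\alpha\coloneq\min_{(s,a)}\big(Q^{*}_{M_{K^{2}_{t_1}}}(s,a)-Q^{*}_{M_{K^{2}_{t_2}}}(s,a)\big)$, which is equivalent to your iteration of $\mathcal{T}_2$). You also correctly identify where the hypothesis enters: the $2\epsilon_2$ slack converts $\max_{a'}Q^{*}_{M_{K^{2}_{t_1}}}(s',a')$ into a backup of stored values.

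However, there is a genuine gap exactly at the step you flagged as the main obstacle. For type-B pairs $(s,a)\in K^{2}_{t_2}\setminus K^{2}_{t_1}$ you propose ``invoking Lemma~\ref{estimation} to replace the empirical reward and transition \ldots by the true model up to an $\epsilon_2$ error.'' This cannot work: Lemma~\ref{estimation} requires $n(s,a)\geq m_2$, and type-B pairs by definition have $n(s,a)<m_2$ at $t_1$ --- that is precisely why they are outside $K^2_{t_1}$. Moreover, the only updates these pairs have ever received are type-1 updates, which do not manipulate empirical estimates $\hat R,\hat T$ at all; they average the $m_1$ observed targets $r[i]+\gamma\,v(s[i])$. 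The correct tool, and the one the paper uses, is Hoeffding's inequality applied to those $m_1$ samples of the most recent type-1 update (exactly as in the proof of Lemma~\ref{type1opt}): the retained bonus $\epsilon_1$ minus the deviation $(\epsilon_1-2\epsilon_2)$ leaves a $+2\epsilon_2$ cushion, which is then consumed by the hypothesis $Q_{t_1}\geq Q^{*}_{M_{K^{2}_{t_1}}}-2\epsilon_2$ to close the super-solution inequality $Q_{t_1}(s,a)\geq R(s,a)+\gamma\sum_{s'}T(s,a,s')\max_{a'}Q^{*}_{M_{K^{2}_{t_1}}}(s',a')$. The probability $1-\frac{\delta}{8}$ in the statement comes from union-bounding this Hoeffding event over the at most $|S||A|(1+\kappa)$ attempted updates (Lemma~\ref{ubound}), not from the event of Lemma~\ref{estimation}. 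A second, minor omission: type-B pairs that have \emph{never} been updated sit at $v_\mathrm{max}=\frac{1}{1-\gamma}$, and for them the inequality follows deterministically from Lemma~\ref{a1}; your phrase ``every update that has touched it retains an $\epsilon_1$ cushion'' silently skips this sub-case, though it is easy to patch.
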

\begin{proof}
See Appendix~\ref{L6}.
\end{proof}

Lemmas~\ref{vi} and \ref{estimation} together have as a consequence the following Lemma, which contributes to establishing the accuracy condition of Theorem~\ref{pac} for the \ac{ddq} algorithm.

\begin{lemma}
\label{lubk2}
During the execution of \ac{ddq}, for all $t$ and $(s,a) \in S \times A$, we have:
\begin{equation}
    \label{lm8}
    Q^{*}_{M_{K^{2}_t}}(s,a) - 2\epsilon_2 \leq Q_t(s,a) \leq Q^{*}_{M_{K^{2}_t}}(s,a) + 2\epsilon_2
\end{equation}
with probability at least $1-\frac{3\delta}{8}$.
\end{lemma}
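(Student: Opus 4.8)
The plan is to prove both inequalities of \eqref{lm8} simultaneously by induction over the \emph{successful timesteps}, exploiting the fact that a value-iteration pass (lines~$43$--$54$) is executed at every successful timestep: a type-$1$ success (line~$31$) sets $t^{*}\gets t$, which makes the condition $t=t^{*}$ fire, and $n(s,a)$ reaching $m_2$ fires the pass directly. Between two consecutive successful timesteps neither $Q_t$ nor $K^{2}_t$ changes---$Q$ is altered only at lines~$31$ and~$52$, and $K^{2}_t$ can grow only when some $n(s,a)$ hits $m_2$, itself a successful timestep---so $M_{K^{2}_t}$ and $Q^{*}_{M_{K^{2}_t}}$ stay fixed there, and it suffices to verify \eqref{lm8} right after each pass. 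At $t=0$ we have $K^{2}_0=\emptyset$ and $Q_0(s,a)=v_\mathrm{max}$; since every pair is then a self-loop of per-step reward $Q_0(s,a)(1-\gamma)$, one gets $Q^{*}_{M_{K^{2}_0}}(s,a)=Q_0(s,a)$ and the base case is immediate.

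For the inductive step I first dispose of the unknown pairs. If $(s,a)\notin K^{2}_t$, the pair leads in $M_{K^{2}_t}$ to the self-looping state $z_{s,a}$ whose only reward is $Q_t(s,a)(1-\gamma)$, so $Q^{*}_{M_{K^{2}_t}}(s,a)=Q_t(s,a)(1-\gamma)+\gamma\,Q_t(s,a)=Q_t(s,a)$ and both inequalities hold with room to spare. The real content concerns the known pairs. The iteration of lines~$43$--$54$ is ordinary value iteration run on the estimated known state-action \ac{mdp} $\hat{M}_{K^{2}_t}$ (\textsc{ml} estimates $\hat{R},\hat{T}$ on the pairs with $n\ge m_2$, and the frozen values $Q_t$ elsewhere, which coincide with the $\hat{M}_{K^{2}_t}$-optimal values on those self-loops, so Lemma~\ref{vi} applies). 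Lemma~\ref{vi} then gives $\lvert Q_\mathrm{vl}-Q^{*}_{\hat{M}_{K^{2}_t}}\rvert\le\epsilon_2$, and comparing the optimal policies of $\hat{M}_{K^{2}_t}$ and $M_{K^{2}_t}$ through Lemma~\ref{estimation} gives $\lvert Q^{*}_{\hat{M}_{K^{2}_t}}-Q^{*}_{M_{K^{2}_t}}\rvert\le\epsilon_2$; together they produce the key estimate $\lvert Q_\mathrm{vl}(s,a)-Q^{*}_{M_{K^{2}_t}}(s,a)\rvert\le 2\epsilon_2$.

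The upper bound is then immediate: lines~$51$--$52$ overwrite $Q$ only when $Q_\mathrm{vl}\le Q$, so after the pass $Q_t(s,a)=\min\{Q^{\mathrm{pre}}(s,a),Q_\mathrm{vl}(s,a)\}\le Q_\mathrm{vl}(s,a)\le Q^{*}_{M_{K^{2}_t}}(s,a)+2\epsilon_2$ whether or not the overwrite fired. For the lower bound I use the same $\min$: it suffices to bound both arguments below by $Q^{*}_{M_{K^{2}_t}}(s,a)-2\epsilon_2$, the second being the key estimate. If $(s,a)$ was not type-$1$ updated at $t$, then $Q^{\mathrm{pre}}=Q_{t_-}$ for the previous successful timestep $t_-$, and the inductive hypothesis combined with the monotonicity $Q^{*}_{M_{K^{2}_{t_-}}}\ge Q^{*}_{M_{K^{2}_t}}$---which is exactly Lemma~\ref{a2} invoked under the inductive lower bound at $t_-$---closes this case.

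The crux, which I expect to be the main obstacle, is a known pair that \emph{is} type-$1$ updated at $t$, so $Q^{\mathrm{pre}}=U(s,a)/m_1+\epsilon_1$. Here I would rerun the Hoeffding concentration behind Lemma~\ref{type1opt}, but comparing against $M_{K^{2}_t}$ rather than $M$: since $(s,a)\in K^{2}_t$ uses the true $R,T$ both in the backup and in the known \ac{mdp}, the average $U/m_1$ concentrates around $R(s,a)+\gamma\sum_{s'}T(s,a,s')\,v_{t_i}(s')$, and each next-state value $v_{t_i}(s')$ recorded in the backup at its sampling time $t_i<t$ satisfies, by the inductive hypothesis at $t_i$ and the time-monotonicity of Lemma~\ref{a2}, $v_{t_i}(s')\ge v^{*}_{M_{K^{2}_{t_i}}}(s')-2\epsilon_2\ge v^{*}_{M_{K^{2}_t}}(s')-2\epsilon_2$. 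With $\epsilon_1=3\epsilon_2$ and $m_1$ as in \eqref{m1}, the extra $\epsilon_1$ absorbs both the discounted $2\epsilon_2$ slack and the Hoeffding deviation, giving $Q^{\mathrm{pre}}\ge Q^{*}_{M_{K^{2}_t}}(s,a)-2\epsilon_2$. The delicate feature is precisely that the backup mixes next-state values taken at different times under different known sets, which is why the monotonicity of Lemma~\ref{a2} is indispensable. Collecting failure probabilities---$\delta/8$ for the key estimate (Lemma~\ref{estimation}), $\delta/8$ for the monotonicity (Lemma~\ref{a2}), and $\delta/8$ for the type-$1$ concentration union-bounded over all attempted updates through $m_1$ (Lemma~\ref{type1opt})---a union bound yields \eqref{lm8} with probability at least $1-\tfrac{3\delta}{8}$.
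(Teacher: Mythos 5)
Your proposal is correct (to the same standard of rigor as the paper's own argument) and reaches the same conclusion, but it closes the lower bound by a genuinely different route, so a comparison is worthwhile. Your upper-bound argument is identical to the paper's: $Q_t = \min\{Q^{\mathrm{pre}}, Q_{\mathrm{vl}}\} \le Q_{\mathrm{vl}}$, then Lemma~\ref{vi} plus Lemma~\ref{estimation}. For the lower bound, the paper also does a strong induction, but it splits the pairs not overwritten by value iteration into three cases --- never updated (handled by Lemma~\ref{a1}), most recent update of type-$2$ (handled by Lemmas~\ref{vi}, \ref{estimation} and \ref{a2}), and most recent update of type-$1$, which it handles by a Hoeffding bound against the algorithm's own values $v_{t^{m_1}}$ followed by a self-referential fixed-point argument: setting $\alpha = \min_{(s,a)}\bigl(Q_n(s,a) - Q^{*}_{M_{K^{2}_n}}(s,a)\bigr)$ and deriving $\alpha \ge \gamma\alpha$, hence $\alpha \ge 0 \ge -2\epsilon_2$. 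You instead collapse all "stale value" cases into one carry-over step (inductive hypothesis at the previous successful timestep plus the monotonicity $Q^{*}_{M_{K^{2}_{t_-}}} \ge Q^{*}_{M_{K^{2}_t}}$ of Lemma~\ref{a2}), which makes Lemma~\ref{a1} unnecessary, and for a fresh type-$1$ update you run Hoeffding directly against $Q^{*}_{M_{K^{2}_t}}$, using the inductive hypothesis at each sample time $t_i$ together with Lemma~\ref{a2} to replace $v_{t_i}(s')$ by $v^{*}_{M_{K^{2}_t}}(s') - 2\epsilon_2$; since for a known pair the comparison expectation is exactly the Bellman backup $Q^{*}_{M_{K^{2}_t}}(s,a)$, the bound $Q^{\mathrm{pre}} \ge Q^{*}_{M_{K^{2}_t}}(s,a) + 2\epsilon_2(1-\gamma)$ follows by arithmetic with $\epsilon_1 = 3\epsilon_2$, and no contraction argument is needed. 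What each approach buys: yours is structurally cleaner (induction only over successful timesteps, one uniform carry-over case, no fixed-point trick), while the paper's contraction route needs only the deterministic time-monotonicity of the stored $Q$-values inside its Hoeffding variable, rather than repeated invocations of the inductive hypothesis and Lemma~\ref{a2} at every sample time. Note also that both proofs share the same technical looseness: Hoeffding is applied to a reference function that is itself sample-dependent (the paper's $v_{t^{m_1}}$, your $v^{*}_{M_{K^{2}_t}}$ through the randomness of $K^{2}_t$), so neither is strictly a sum of independent bounded variables; and both accountings land on the same $1 - \tfrac{3\delta}{8}$ via the three $\tfrac{\delta}{8}$ events (estimation, monotonicity, and the union-bounded type-$1$ concentration).
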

\begin{proof}
See Appendix~\ref{L8}.
\end{proof}

Lemma~\ref{sbound} has already offered a bound on the number of updates in \ac{ddq}; however, for the complexity condition of Theorem~\ref{pac} to be satisfied, one needs to show that during the execution of Algorithm \ref{alg:ddq} the number of escape events is also bounded. 
The following Lemma is the first step: it states that by picking $m_1$ as in \eqref{m1}, and under specific conditions, an escape event necessarily results in a successful type-$1$ update. 
With the number of updates bounded,  Lemma~\ref{lemma6} can be utilized to derive a bound on the number of escape events. 

\medskip
\begin{lemma}
\label{lemma6}
With the choice of $m_1$ as in \eqref{m1}, and assuming the \ac{ddq} algorithm at timestep $t$ with $(s,a) \notin K_t$, $l(s,a)=0$ and $\mathsf{learn}(s,a)=\mathrm{true}$, we know that an attempted type-1 update of $Q(s,a)$ will necessarily occur within $m_1$ occurrences of $(s,a)$ after $t$, say at timestep $t_{m_1}$. If $(s,a)$ has been visited fewer than $m_2$ till $t_{m_1}$, then the attempted type-1 update at $t_{m_1}$ will be successful with probability at least $1-\frac{\delta}{8}$. 
\end{lemma}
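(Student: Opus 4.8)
The plan is to separate the statement into its two distinct claims: a \emph{deterministic} claim, that an attempted type-$1$ update is forced to occur within the next $m_1$ visits to $(s,a)$, and a \emph{probabilistic} claim, that (under the additional hypothesis $n(s,a)<m_2$) this attempt is successful with high probability. These are established by entirely different means---the first by bookkeeping on the algorithm's flags and counters, the second by a concentration argument closely parallel to the one behind Lemma~\ref{type1opt}.

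For the deterministic part, I would track the flag $\mathsf{learn}(s,a)$ and the counter $l(s,a)$ over the window beginning at $t$. By hypothesis $\mathsf{learn}(s,a)=\mathrm{true}$ and $l(s,a)=0$ at $t$. Inspecting Algorithm~\ref{alg:ddq}, the counter $l(s,a)$ is incremented on each subsequent visit to $(s,a)$ while the flag is true, and is reset only once it reaches $m_1$; crucially, the flag is \emph{never} set to $\mathrm{false}$ except inside the block guarded by $l(s,a)=m_1$ (lines $29$--$36$). Hence, as long as $l(s,a)<m_1$, the flag can only be (re)set to $\mathrm{true}$ while $l(s,a)$ keeps increasing by one per occurrence, so after exactly $m_1$ occurrences of $(s,a)$ the event $l(s,a)=m_1$ with $\mathsf{learn}(s,a)=\mathrm{true}$ occurs. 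By definition this is precisely an attempted type-$1$ update, at the timestep I call $t_{m_1}$.

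For the probabilistic part, the first observation is that throughout the window $[t,t_{m_1}]$ the estimate $Q(s,a)$ remains frozen at $Q_t(s,a)$: no type-$1$ update can touch it before $t_{m_1}$ (that is the first attempt), and no type-$2$ update can either, since $n(s,a)<m_2$ forces $Q_\mathrm{vl}(s,a)$ to stay at its initialization and line~$52$ to act as a no-op. Next, since $(s,a)\notin K_t$ it is in particular outside $K^1_t$ (Definition~\ref{definition:beta}), giving the strict margin $Q_t(s,a)-\big(R(s,a)+\gamma\sum_{s'}T(s,a,s')v_t(s')\big)>3\epsilon_1$. I would then analyze the running sum $U(s,a)$: writing $X_i=r_i+\gamma\,v_{t_i}(s'_i)$ for the $i$-th target collected, each $X_i$ lies in $[0,v_\mathrm{max}]$, and using the monotone non-increase of all $v$-estimates (both update types in \ac{ddq} only lower $Q$, and $Q$ starts at $v_\mathrm{max}$) one bounds $X_i\le Y_i:=r_i+\gamma\,v_t(s'_i)$. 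The point of the substitution is that the $Y_i$ \emph{are} i.i.d.\ (their only randomness is $s'_i\sim T(s,a,\cdot)$) with common mean $R(s,a)+\gamma\sum_{s'}T(s,a,s')v_t(s')$, so ordinary Hoeffding applies to $\tfrac1{m_1}\sum_i Y_i$; choosing $m_1$ as in \eqref{m1} makes the upward deviation smaller than $\epsilon_1-2\epsilon_2$ except with probability $\tfrac{\delta}{8|S||A|(1+\kappa)}$. Combining, $U(s,a)/m_1=\tfrac1{m_1}\sum_i X_i\le\tfrac1{m_1}\sum_i Y_i< R(s,a)+\gamma\sum_{s'}T(s,a,s')v_t(s')+(\epsilon_1-2\epsilon_2)$, whence $Q_t(s,a)-U(s,a)/m_1>3\epsilon_1-(\epsilon_1-2\epsilon_2)=2\epsilon_1+2\epsilon_2\ge 2\epsilon_1$, which is exactly the success condition on line~$30$. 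A union bound over all attempted updates, numbering at most $|S||A|(1+\kappa)$ by Lemma~\ref{ubound}, upgrades the per-attempt guarantee to the stated $1-\tfrac{\delta}{8}$.

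The step I expect to be the main obstacle is the concentration argument, and specifically making the reduction from the history-dependent targets $X_i$ to the i.i.d.\ surrogates $Y_i$ rigorous: the $v$-estimates feeding $X_i$ are random and correlated with the sampling history, so Hoeffding cannot be applied to the $X_i$ directly. Dominating $X_i$ by $Y_i$ via monotonicity sidesteps a full martingale treatment, but it must be verified that the $v$-estimates are genuinely non-increasing across the whole window---which rests on the fact that line~$52$ only ever decreases a $Q$-value---and that freezing $v_t$ really yields i.i.d.\ samples governed by $T(s,a,\cdot)$ and $R$. The remaining bookkeeping, matching the deviation threshold $\epsilon_1-2\epsilon_2$ to the $m_1$ of \eqref{m1} and the union-bound cardinality to Lemma~\ref{ubound}, is routine.
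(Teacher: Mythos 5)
Your proposal is correct and takes essentially the same route as the paper's proof: deterministic flag/counter bookkeeping forces the attempted update at the $m_1$-th visit, and success follows from Hoeffding applied to targets with the value function frozen at the start of the window, combined with the monotone non-increase of the estimates, the $3\epsilon_1$ margin from $(s,a)\notin K^1_t$, and a union bound over the at most $|S||A|(1+\kappa)$ attempted updates of Lemma~\ref{ubound}. The only cosmetic differences are that you freeze the values at $t$ where the paper freezes them at the first visit $t_1$ of the window, and that you state explicitly two facts the paper leaves implicit (that $Q(s,a)$ itself cannot change before $t_{m_1}$ when $n(s,a)<m_2$, and that the correct Hoeffding range is $[0,v_\mathrm{max}]$ rather than $[0,1]$).
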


\begin{proof}
See Appendix~\ref{L9}.
\end{proof}

\begin{lemma}
\label{lemma7}  
Let $t$ be the timestep when  $(s,a)$ has been visited for $m_1$ times after the conditions of Lemma~\ref{lemma6} were satisfied. 
If the update at timestep $t$ is unsuccessful and at timestep $t+1$ it is $\mathsf{learn}(s,a)=\mathrm{false}$, then $(s,a) \in K_{t+1}$. 
\end{lemma}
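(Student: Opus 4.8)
The plan is to separate the easy bookkeeping case from the substantive one and to reduce the latter to a single Hoeffding estimate on the empirical Bellman backup accumulated in $U(s,a)$. If $n(s,a)\ge m_2$ at $t+1$ then $(s,a)\in K^2_{t+1}\subseteq K_{t+1}$ immediately, since $n(s,a)$ never decreases; so I would assume $n(s,a)<m_2$ and aim to place $(s,a)$ in $K^1_{t+1}$, i.e.\ to establish
\[
Q_{t+1}(s,a)-\Big(R(s,a)+\gamma\textstyle\sum_{s'}T(s,a,s')\,v_{t+1}(s')\Big)\le 3\epsilon_1 .
\]

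First I would translate the two hypotheses into algorithmic facts. Because the attempted type-$1$ update at $t$ is unsuccessful, the test on line~$30$ fails, giving $Q_t(s,a)-U(s,a)/m_1<2\epsilon_1$; and since this leaves all estimates untouched, $Q_{t+1}(s,a)=Q_t(s,a)$. Because $\mathsf{learn}(s,a)$ equals $\mathrm{false}$ at $t+1$ although collection ran all the way to $l(s,a)=m_1$, the flag can only have been cleared through the \textbf{ElsIf} on line~$32$, whose guard is $b(s,a)>t^{*}$. As $t^{*}$ records the most recent successful timestep and the current window begins at $b(s,a)$, this means \emph{no} successful timestep --- hence no $Q$-value update anywhere --- occurred during the whole window $[b(s,a),t]$. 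The value function is therefore frozen across the window, so every target summed into $U(s,a)$ uses one and the same $v$, and $v_{t+1}=v$.

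The heart of the argument is then a single Hoeffding bound. Since $v$ is the value table fixed throughout the window, it is independent of the $m_1$ fresh pairs $(r_i,s_i')$ drawn during it, so the targets $X_i=r_i+\gamma\,v(s_i')\in[0,v_{\max}]$ are i.i.d.\ with mean exactly $R(s,a)+\gamma\sum_{s'}T(s,a,s')v(s')$ and $U(s,a)/m_1=\tfrac1{m_1}\sum_i X_i$. With $m_1$ chosen as in \eqref{m1}, the good event already used in Lemmas~\ref{type1opt} and~\ref{lemma6} yields $U(s,a)/m_1\le R(s,a)+\gamma\sum_{s'}T(s,a,s')v(s')+(\epsilon_1-2\epsilon_2)$. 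Combining this with the failed-update inequality and $v_{t+1}=v$,
\[
Q_{t+1}(s,a)-\Big(R(s,a)+\gamma\textstyle\sum_{s'}T(s,a,s')\,v_{t+1}(s')\Big)
< \frac{U(s,a)}{m_1}+2\epsilon_1-\Big(R(s,a)+\gamma\textstyle\sum_{s'}T(s,a,s')\,v(s')\Big)
\le 3\epsilon_1-2\epsilon_2<3\epsilon_1,
\]
so $(s,a)\in K^1_{t+1}\subseteq K_{t+1}$, as required.

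The step I expect to be delicate is the freezing/independence claim, because Hoeffding is only legitimate when the value function entering the targets does not depend on the very samples being averaged. This is exactly what $b(s,a)>t^{*}$ buys: I would argue carefully that the \textbf{ElsIf} guard forbids any update inside the window, so the $X_i$ are genuinely i.i.d.\ and the mean they estimate is the backup against the same $v_{t+1}$ appearing in Definition~\ref{definition:beta}. Everything else is the deterministic chain above, which uses only the failed test on line~$30$ and the value of $m_1$.
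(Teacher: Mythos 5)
Your proof is correct, but it takes a genuinely different route from the paper's. The paper argues by contradiction and does no new probability: assuming $(s,a)\notin K_{t+1}$, the failed update gives $K_{t+1}=K_t$, so $(s,a)\notin K_t$; the contrapositive of Lemma~\ref{lemma6} then forces $(s,a)\in K_{t_1}$ at the window start $t_1=b(s,a)$; since membership in $K$ can only change at a successful timestep, a successful timestep must then lie inside the window, which would give $t^{*}\geq b(s,a)$ at the check on line~32 and thus prevent $\mathsf{learn}(s,a)$ from ever being set to $\mathrm{false}$ --- contradicting the hypothesis. You argue directly instead: the guard $b(s,a)>t^{*}$ means the value table is frozen over the whole window, so the targets accumulated in $U(s,a)$ are genuine i.i.d.\ samples against the single table $v=v_{t+1}$, and one Hoeffding estimate (on the same union-bound good event used in Lemmas~\ref{type1opt} and~\ref{lemma6}) combined with the failed test on line~30 places $(s,a)$ explicitly in $K^{1}_{t+1}$ with the quantitative margin $3\epsilon_1-2\epsilon_2<3\epsilon_1$. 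The paper's route buys brevity and reuse: all randomness stays quarantined inside Lemma~\ref{lemma6}, and no second concentration argument is needed. Your route buys more information --- it identifies which component of $K_{t+1}$ the pair lands in and by what margin --- and your freezing observation makes the i.i.d.\ structure cleaner than in the proof of Lemma~\ref{lemma6} itself, which must invoke monotonicity of the estimates ($v_{t_i}\leq v_{t_1}$) precisely because its window is not frozen. Note finally that both proofs, despite the deterministic wording of the lemma, hold only on the $1-\tfrac{\delta}{8}$ good event: the paper's because the ``contrapositive'' of Lemma~\ref{lemma6} is a valid contrapositive only on that event, and yours through the explicit Hoeffding step; so your proposal is not weaker on that score.
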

\begin{proof}
See Appendix~\ref{L10}.
\end{proof}

A bound on the number the escape events of \ac{ddq} algorithm can be derived in a straightforward way. Note that a state-action pair that is visited $m_2$ times becomes a permanent member of set $K_t$.  
Therefore, the number of escape events is bounded by $|S||A|m_2$. 
On the other hand, Lemma~\ref{lemma6} and the $\mathsf{learn}$ flag mechanism (i.e. Lemma~\ref{lemma7}) suggest another upper bound on escape events. 
The following Lemma simply states an upper bound for escape events in \ac{ddq} as the minimum among the two bounds.

\begin{lemma}
\label{ebound}
During the execution of \ac{ddq}, with the assumption that Lemma~\ref{lemma6} holds, the total number of timesteps with $(s_t,a_t) \notin K_t$ (i.e. escape events) is at most $\min\{2 m_1 \kappa, |S||A| m_2)\}$.
\end{lemma}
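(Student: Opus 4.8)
The plan is to establish the two bounds inside the minimum separately, since whichever is smaller is automatically an upper bound on the number of escape events.

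The bound $|S||A|m_2$ is the easy half. I would argue that a fixed pair $(s,a)$ can trigger at most $m_2$ escape events: every visit increments $n(s,a)$, and as soon as $n(s,a)\ge m_2$ the pair enters $K^2_t\subseteq K_t$ and, since $n(s,a)$ never decreases, remains in $K_t$ forever after. Hence no escape event for $(s,a)$ can occur past its $m_2$-th visit, and summing over all $|S||A|$ pairs gives the bound.

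The bound $2m_1\kappa$ is the substantive half and rests on Lemmas~\ref{sbound},~\ref{lemma6}, and~\ref{lemma7}. First I would show that an escape event at $(s,a)$ forces $\mathsf{learn}(s,a)=\mathrm{true}$ at that timestep. Suppose instead $\mathsf{learn}(s,a)=\mathrm{false}$; the flag was set false at some earlier failed attempted update $t'$ with $b(s,a)>t^{*}$, and by Lemma~\ref{lemma7} this yields $(s,a)\in K_{t'+1}$. While the flag stays false, $b(s,a)$ is frozen and no visit re-enters the sampling block, so the flag can only be reset to true (via the $b(s,a)\le t^{*}$ test) after $t^{*}$ advances past $b(s,a)$, i.e. after a genuine successful timestep occurs. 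Thus, for as long as $\mathsf{learn}(s,a)$ remains false, no $Q$-value has changed, so the defining inequality of $K^1_t$ is preserved and $(s,a)\in K_t$ throughout, contradicting the escape event.

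Given this, I would partition the escape events of each pair into windows of at most $m_1$ consecutive occurrences, each window being a maximal stretch of sampling ending in an attempted update. By Lemma~\ref{lemma6}, a window that begins with $l(s,a)=0$, $\mathsf{learn}(s,a)=\mathrm{true}$, and $(s,a)\notin K_t$, and during which $n(s,a)$ stays below $m_2$, must terminate in a successful type-$1$ update, i.e.\ a successful timestep for $(s,a)$. Each window therefore either consumes one of the globally at most $\kappa$ successful timesteps (Lemma~\ref{sbound}), or fails a precondition of Lemma~\ref{lemma6}, which can only happen if a successful timestep occurred elsewhere during the window and reset the $\mathsf{learn}$ flags. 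The number of such interrupting successful timesteps is again at most $\kappa$, and each contributes at most one extra window carrying escape events. Charging $m_1$ escape events to each window then yields at most $2m_1\kappa$ escape events in total.

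The main obstacle I anticipate is precisely this last bookkeeping step: making the assignment of escape events to successful timesteps exact, so that no window is charged twice and every window carrying escape events is paired either with its own terminating successful update or with a single interrupting global successful timestep. Verifying that the preconditions of Lemma~\ref{lemma6} (namely $l(s,a)=0$ and $\mathsf{learn}(s,a)=\mathrm{true}$ at the window's start) genuinely hold at the beginning of each fresh window after a reset, and that one interruption can invalidate at most one window, is where the factor of $2$ is earned and where the argument must be handled with care.
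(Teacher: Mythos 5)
Your first half (the $|S||A|m_2$ bound) is correct and is the same argument the paper uses. The $2m_1\kappa$ half, however, contains two genuine gaps. The smaller one: your claim that ``for as long as $\mathsf{learn}(s,a)$ remains false, no $Q$-value has changed'' inverts the flag logic. The flag of $(s,a)$ is only re-examined at visits of $(s,a)$ (line 20 of Algorithm~\ref{alg:ddq}), so successful timesteps for \emph{other} pairs can occur---and lower $v_t(\cdot)$, hence eject $(s,a)$ from $K^1_t$---while $\mathsf{learn}(s,a)$ is still false. An escape event at a timestep that begins with the flag false is therefore entirely possible, and your contradiction does not go through. What is true (and what the paper argues via Lemma~\ref{lemma7}) is that in this situation the successful timestep responsible for the escape has already advanced $t^{*}$ past $b(s,a)$, so the flag is flipped back to true \emph{during} that same timestep, after which the preconditions of Lemma~\ref{lemma6} hold; the conclusion you need survives, but not by your argument.

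The more serious gap is exactly the bookkeeping step you flagged: the claim that one interrupting successful timestep invalidates at most one window is false. A single successful update of some pair $(s',a')$ lowers $v_t(s')$, and this can simultaneously push the residual $Q_t(s,a)-\big(R(s,a)+\gamma\sum_{s''}T(s,a,s'')v_t(s'')\big)$ above $3\epsilon_1$ for \emph{every} pair $(s,a)$ whose transitions reach $s'$; all such pairs that happen to be mid-window are ejected from $K^1_t$ at once, and each of their pending attempted updates may fail. Hence ``number of failed windows $\le$ number of interruptions'' does not hold, and your global charge of $2\kappa$ windows is unjustified. The paper's accounting is per pair, not global: it shows that from any escape event of a \emph{fixed} $(s,a)$, a successful timestep \emph{for that same pair} occurs within at most $2m_1$ further experiences of $(s,a)$---the factor $2$ covering one possibly-doomed in-progress window plus one fresh window guaranteed to succeed by Lemma~\ref{lemma6}---and then invokes the per-pair bound of $1+\tfrac{1}{(1-\gamma)\epsilon_1}$ successful timesteps from Lemma~\ref{sbound}, summing over the $|S||A|$ pairs to obtain $2m_1\kappa$. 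Recasting your accounting this way, i.e.\ pairing each failed window of $(s,a)$ with that same pair's subsequent recovery success rather than with a global interruption, repairs the argument.
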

\begin{proof}
See Appendix~\ref{L11}.
\end{proof}

Next comes the main result of this paper. 
The statement that follows establishes the  \ac{pac} properties of the \ac{ddq} algorithm and provides a bound on its sample complexity.

\begin{theorem}
\label{ddq pac}
Consider an \ac{mdp} $M= \{S,A,T,R,\gamma\}$, and let $\epsilon \in (0,\tfrac{1}{1-\gamma})$, and $\delta \in (0,1)$. 
There exist $m_1= \mathcal{O}\left(\sfrac{\ln{(\sfrac{|S|^2|A|^2}{\delta})}}{\epsilon^2_1(1-\gamma)^2}\right)$ and $m_2= \mathcal{O}\left(\sfrac{|S|+\ln{(\sfrac{|S||A|}{\delta})}}{\epsilon^2_2(1-\gamma)^4}\right)$ with $\frac{1}{\epsilon_1}=\tfrac{3}{(1-\gamma)\epsilon}=\mathcal{O}\left(\sfrac{1}{\epsilon(1-\gamma)}\right)$ and $\epsilon_2=\frac{\epsilon_1}{3}$, such that if \ac{ddq} algorithm is executed, $M$ follows a $4 \epsilon$-optimal policy with probability at least $1-2 \delta$ on all but 
\begin{equation*}
    \mathcal{O} \left( \min\big\{\mathcal{O}(\sfrac{|S|^2|A|}{\epsilon^3(1-\gamma)^8} ), \mathcal{O}(\sfrac{|S||A|}{\epsilon^4(1-\gamma)^8} ) \big\} \right)
\end{equation*}
timesteps (logarithmic factors ignored).
\end{theorem}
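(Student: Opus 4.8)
The plan is to verify that \ac{ddq} satisfies the three hypotheses of Theorem~\ref{pac}---\textbf{optimism}, \textbf{accuracy}, and \textbf{complexity}---for the greedy policy $\pi_t(s)=\mathrm{arg}\max_a Q_t(s,a)$ and the known set $K_t$ of Definition~\ref{definition:beta}, and then to read off the sample complexity by substituting the resulting bound $\zeta(\epsilon,\delta)$ into \eqref{learningcom}. First I would fix the parameters exactly as in the statement: $\epsilon_1=\tfrac{(1-\gamma)\epsilon}{3}$ and $\epsilon_2=\tfrac{\epsilon_1}{3}$ (so that $\epsilon_2\in(0,\tfrac{\epsilon_1}{2})$, as Lemma~\ref{type1opt} requires), with $m_1$ as in \eqref{m1} and $m_2$ as in Lemma~\ref{estimation}. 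The set $K_t=K^1_t\cup K^2_t$ depends only on the exploration history, and the structural requirements of Theorem~\ref{pac} ($K_t=K_{t+1}$ except at updates or escapes, and $Q_t(s,a)\le v_{\max}$) follow from the initialization on line~$3$ and the update rules, so the theorem applies once the three conditions are established.

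For \textbf{optimism} I would argue inductively over $t$ that $v_t(s)\ge v^*_M(s)-2\epsilon_2$, which suffices since $2\epsilon_2\le\epsilon$. The base case holds because every $Q$-value starts at $v_{\max}\ge Q^*_M$. A successful type-$1$ update restores full optimism, $Q'(s,a)\ge Q^*_M(s,a)$, with probability at least $1-\tfrac{\delta}{8}$ by Lemma~\ref{type1opt}, whose hypothesis is precisely the inductive bound on $v_{t'}$ for $t'<t$. For type-$2$ updates I would chain Lemmas~\ref{a2} and \ref{lubk2}: Lemma~\ref{a2} (again invoked under the inductive bound) shows $Q^*_{M_{K^2_t}}(s,a)$ is non-increasing in $t$, and, anchored at $K^2_0=\emptyset$ where $Q^*_{M_{K^2_0}}=v_{\max}\ge Q^*_M$ and at the all-known endpoint controlled by Lemma~\ref{estimation}, this keeps $Q^*_{M_{K^2_t}}$ within $\mathcal{O}(\epsilon_2)$ above $Q^*_M$; Lemma~\ref{lubk2} then yields $Q_t(s,a)\ge Q^*_{M_{K^2_t}}(s,a)-2\epsilon_2$, maintaining the invariant and closing the induction.

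For \textbf{accuracy} I would show that on every known pair the one-step Bellman residual with respect to the true $M$ is at most $3\epsilon_1$: for $(s,a)\in K^1_t$ this is the defining inequality of the set, and for $(s,a)\in K^2_t$ it follows from the empirical-model accuracy of Lemma~\ref{estimation} combined with the convergence of the value iteration in Lemma~\ref{vi}. Since $M_{K_t}$ reproduces $Q_t$ exactly on the unknown pairs (they are absorbing with reward $Q_t(s,a)(1-\gamma)$), the estimate $v_t$ is a near-fixed-point of the Bellman operator for $\pi_t$ on $M_{K_t}$ with residual at most $3\epsilon_1$, and a standard $\tfrac{1}{1-\gamma}$ contraction argument gives
\begin{equation*}
v_t(s)-v^{\pi_t}_{M_{K_t}}(s)\le\frac{3\epsilon_1}{1-\gamma}=\epsilon .
\end{equation*}

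Finally, for \textbf{complexity} I would take $\zeta(\epsilon,\delta)$ to be the number of $Q$-value updates plus the number of escape events. Lemma~\ref{sbound} bounds the former by $\kappa$, and Lemma~\ref{ebound} (resting on Lemmas~\ref{lemma6} and \ref{lemma7}) bounds the latter by $\min\{2m_1\kappa,\,|S||A|m_2\}$, so $\zeta=\mathcal{O}(\min\{2m_1\kappa,|S||A|m_2\})$. Substituting $\kappa=\mathcal{O}\big(\tfrac{|S||A|}{\epsilon(1-\gamma)^2}\big)$, the chosen $m_1,m_2$, and $\epsilon_1,\epsilon_2=\Theta((1-\gamma)\epsilon)$ into \eqref{learningcom} turns the two arms of the minimum into $\mathcal{O}\big(\tfrac{|S||A|}{\epsilon^4(1-\gamma)^8}\big)$ and $\mathcal{O}\big(\tfrac{|S|^2|A|}{\epsilon^3(1-\gamma)^8}\big)$, the claimed bound. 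The $1-2\delta$ confidence comes from union-bounding the $\tfrac{\delta}{8}$-type failure events of Lemmas~\ref{type1opt}, \ref{estimation}, \ref{a2}, \ref{lubk2}, and \ref{lemma6} into a single $\delta$, and then adding the extra $\delta$ charged by Theorem~\ref{pac}. I expect the main obstacle to be the optimism step: it is the only genuinely inductive argument, it must accommodate both update mechanisms simultaneously as the partition $K_t=K^1_t\cup K^2_t$ evolves, and it is where the slack bookkeeping (keeping the type-$2$ branch within the tight $2\epsilon_2$ invariant via the monotonicity of Lemma~\ref{a2}) and the probability accounting are most delicate, since Lemma~\ref{type1opt} must hold jointly across all $\mathcal{O}(|S||A|(1+\kappa))$ attempted updates of Lemma~\ref{ubound} without the cumulative failure probability exceeding $\delta$.
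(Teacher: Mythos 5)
Your overall architecture is exactly the paper's: you verify optimism, accuracy, and complexity for Theorem~\ref{pac} with $\epsilon_1=\tfrac{(1-\gamma)\epsilon}{3}$, $\epsilon_2=\tfrac{\epsilon_1}{3}$, take $\zeta(\epsilon,\delta)\le\kappa+\min\{2m_1\kappa,|S||A|m_2\}$ from Lemmas~\ref{sbound} and \ref{ebound}, and substitute into \eqref{learningcom}; your accuracy argument, complexity arithmetic, and probability accounting all match the paper's proof. The genuine gap is in the mechanism you propose for the type-$2$ branch of optimism. There you need $Q^{*}_{M_{K^{2}_t}}(s,a)\ge Q^{*}_M(s,a)$ with \emph{no} extra slack, because Lemma~\ref{lubk2} already spends the full $2\epsilon_2$ budget and the invariant you must re-establish is exactly $Q_t\ge Q^{*}_M-2\epsilon_2$. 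Your monotonicity-plus-anchoring argument cannot deliver this. Lemma~\ref{a2} makes $Q^{*}_{M_{K^{2}_t}}$ non-increasing in $t$, so the anchor at $K^{2}_0=\emptyset$ (value $v_\mathrm{max}$) only yields the useless upper bound $Q^{*}_{M_{K^{2}_t}}\le v_\mathrm{max}$; a lower bound would require an anchor at a \emph{later} time, and your ``all-known endpoint'' is precisely such an anchor --- but that endpoint need never be reached during execution (the algorithm typically never visits every pair $m_2$ times), and Lemma~\ref{a2} only compares timesteps that actually occur, so the argument is circular. Moreover, Lemma~\ref{estimation} plays no role at that endpoint: $M_{K^{2}_t}$ is by definition built from the \emph{true} $R$ and $T$ on known pairs, so no estimation error enters $Q^{*}_{M_{K^{2}_t}}$ itself (that error is already priced into Lemma~\ref{lubk2}). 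Finally, even if your route produced $Q^{*}_{M_{K^{2}_t}}\ge Q^{*}_M-C\epsilon_2$ for some constant $C>0$, the induction would not close: one step would degrade the invariant from $2\epsilon_2$ to $(2+C)\epsilon_2$, so the hypothesis assumed at the next step (which Lemma~\ref{type1opt} requires in the form $v_{t'}\ge v^{*}_M-2\epsilon_2$ with $\epsilon_2<\tfrac{\epsilon_1}{2}$) is no longer what you have re-proved, and the slack compounds over time.

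The fix is the paper's direct argument, which needs no monotonicity in $t$ at all: every pair outside $K^{2}_t$ has only ever received type-$1$ updates (type-$2$ updates require $n(s,a)\ge m_2$), so by the inductive hypothesis and Lemma~\ref{type1opt} its current value satisfies the \emph{full} optimism $Q_t(s,a)\ge Q^{*}_M(s,a)$, not merely the $2\epsilon_2$-relaxed version. Since $M_{K^{2}_t}$ agrees with $M$ on $K^{2}_t$ and replaces each pair outside $K^{2}_t$ by an absorbing state whose value $Q_t(s,a)$ dominates $Q^{*}_M(s,a)$, monotonicity of the Bellman operator gives $Q^{*}_{M_{K^{2}_t}}(s,a)\ge Q^{*}_M(s,a)$ pointwise, and Lemma~\ref{lubk2} then closes the induction within exactly the $2\epsilon_2$ budget. (Lemma~\ref{a2}'s proper role is inside the proof of Lemma~\ref{lubk2}, where it compares the known-state-action \textsc{mdp}s at the time of a pair's last type-$2$ update and at the current time; it is not needed in the main induction.)
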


\begin{proof}
We intend to apply Theorem~\ref{pac}. 
To satisfy the \emph{optimism} condition, we start by proving that $Q_t(s,a) \geq Q^{*}_M(s,a) - 2\epsilon_2$ by strong induction for all state-action pairs: 
\begin{inparaenum}[(i)]
\item At $t=1$, the value of all state-action pairs are set to the maximum possible value in \ac{mdp} $M$. 
This implies that $Q_1(s,a) \geq Q^{*}_M(s,a) \geq Q^{*}_M(s,a) - 2\epsilon_2$, therefore  $v_t(s) \geq v^{*}_M(s) - 2\epsilon_2$. 
\item Assume that $Q_t(s,a) \geq Q^{*}_M(s,a) - 2\epsilon_2$ holds for all timesteps before or equal to $t=n-1$. 
\item At timestep $t=n$, all $(s,a) \notin K^{2}_n$ can only be updated by a type-$1$ update before or at $t=n$. 
For these state-action pairs, Lemma~\ref{type1opt} implies that it will be $Q_n(s,a) \geq Q^{*}_M(s,a)$ with probability $1-\frac{\delta}{8}$.

For all $(s,a) \in K^{2}_n$, on the other hand, by Lemma~\ref{lubk2} and with probability $1-\frac{3\delta}{8}$:
\end{inparaenum}
\begin{equation*}
    Q_n(s,a) \geq Q^{*}_{M_{K^{2}_n}}(s,a) - 2\epsilon_2 \geq Q^{*}_M(s,a) -2\epsilon_2
\end{equation*}
Note that $Q^{*}_{M_{K^{2}_n}}(s,a) \geq Q^{*}_M(s,a)$ since $M_{K^2_n}$ is similar to $M$ exept for $(s,a) \notin K^2_n$ which their values are set to be more than or equal to $Q^*_M(s,a)$. Therefore, $Q_t(s,a) \geq Q^{*}_M(s,a) - 2\epsilon_2$ holds for all timesteps $t$ and all state-action pairs, which directly implies $v_t(s) \geq v^{*}_M(s) - 2\epsilon_2 \geq v^{*}_M(s) - \epsilon$.\\

To establish the \emph{accuracy} condition, first write
\begin{equation}
    \label{thm2-Qt}
Q_t(s,a) = R(s,a) + \gamma \sum_{s'} T(s,a,s') \max_{a'} Q_t(s',a') + \beta(s,a)
\end{equation}
If $(s,a) \in K_t$, there can be two cases: either $(s,a) \in K_t^1$ or $(s,a) \in K_t^2$.
If $(s,a) \in K_t^1$, then by Definition~\ref{definition:beta} $\beta(s,a) \le 3 \epsilon_1$.
If $(s,a) \in K_t^2$, 
then Lemma~\ref{lubk2} (right-hand side inequality) implies that with probability at least $1-\frac{3\delta}{8}$
\begin{equation}
    \label{accuracy-in1}
     2\epsilon_2 \ge Q_t(s,a) - Q^{*}_{M_{K^{2}_t}}(s,a)
\end{equation}
Meanwhile,
\begin{equation}
    \label{thm2-Q*}
 Q^{*}_{M_{K^{2}_{t}}}(s,a) = R(s,a) + \gamma \sum_{s'} T(s,a,s') \max_{a'} Q^{*}_{M_{K^{2}_{t}}}(s',a')
\end{equation}
and substituting from \eqref{thm2-Q*} and \eqref{thm2-Qt} into \eqref{accuracy-in1} yields
\begin{equation} \label{accuracy-in2}
\gamma \sum_{s'} T(s,a,s') \left( \max_{a'} Q_t(s',a') - \max_{a'} Q^{*}_{M_{K^{2}_{t}}}(s',a')  \right)
    + \beta(s,a) \le 2\epsilon_2
\end{equation}
Let $ a_1 \coloneq \arg\max_{a'} Q_{M_{K^{2}_{t}}}(s',a')$ and bound the difference
\begin{align*}
 \max_{a'} Q_t(s',a') - \max_{a'} Q^{*}_{M_{K^{2}_{t}}}(s',a') 
 &= \max_{a'} Q_t(s',a') - Q^{*}_{M_{K^{2}_{t}}}(s',a_1) \\
 &\ge Q_t(s',a_1) - Q^{*}_{M_{K^{2}_{t}}}(s',a_1)
\end{align*}
Apply Lemma~\ref{lubk2} (left-hand side inequality) to the latter expression to get
\[
 \max_{a'} Q_t(s',a') - \max_{a'} Q^{*}_{M_{K^{2}_{t}}}(s',a') \ge -2\epsilon_2
\]
which implies for \eqref{accuracy-in2} that
\[
  2\epsilon_2 \ge \beta(s,a) -2\gamma\epsilon_2 \implies 
  \beta(s,a) \le 2(1+\gamma)\epsilon_2 \le 3 \epsilon_2
\]
Thus in any case when $(s,a) \in K_t$, $\beta(s,a) \le 3 \epsilon_1$ with probability at least $1-\frac{3\delta}{8}$.
In light of this, considering a policy dictating actions $a = \pi_t(s)$ and mirroring \eqref{thm2-Qt}--\eqref{thm2-Q*} we write for the values of states in which $\big(s,\pi_t(s)\big) \in K_t$
\begin{align*}
v^{\pi_t}_{M_{K_{t}}}(s) &= R\big(s,\pi_t(s)\big)+ \gamma \sum_{s'} T\big(s,\pi_t(s),s'\big) v^{\pi_t}_{M_{K_{t}}}(s') \\
v_t(s) &= R\big(s,\pi_t(s)\big)+ 
 \gamma \sum_{s'} T\big(s,\pi_t(s),s'\big) v_t(s') + \beta(s,a)
\end{align*}
while for those in which $\big(s,\pi_t(s)\big) \notin K_t$, we already know that
\begin{align*}
 v^{\pi_t}_{M_{K_{t}}}(s) &= Q_t\big(s,\pi_t(s)\big) \\
 v_t(s) &= Q_t\big(s,\pi_t(s)\big)
\end{align*}
So now if one denotes
\[
  \alpha \coloneq \max_{s} \big( v_t(s) - v^{\pi_t}_{M_{K_t}}(s) \big) =  v_t(s^\ast) - v^{\pi_t}_{M_{K_t}}(s^\ast) 
\]
then either $\alpha = 0$ (when $\big(s,\pi_t(s)\big) \notin K_t$) or it affords an upper bound
\begin{multline*}
 \gamma \sum_{s'} T(s^{*},\pi_t(s^{*}),s') \big( v_t(s') - v^{\pi_t}_{M_{K_{t}}}(s') \big) +  \beta\big(s^*,\pi_t(s^*)\big) \\ \leq 
    \gamma \sum_{s'} T(s^{*},\pi_t(s^{*}),s') \big( v_t(s') - v^{\pi_t}_{M_{K_{t}}}(s') \big) + 3\epsilon_1 \leq 
    \gamma \alpha +3\epsilon_1 
\end{multline*}
from which it follows that
$
\alpha \le \gamma \alpha + 3 \epsilon_1 \implies 
\alpha \le \tfrac{ 3\epsilon }{ 1 - \gamma } = \epsilon
$.

Finally, to analyze \emph{complexity} invoke Lemmas~\ref{sbound} and \ref{ebound} to see that the learning complexity $\zeta (\epsilon,\delta)$ is bounded by $\kappa + \min{(2 m_1 \kappa, |S||A| m_2)}$ with probability $1-\frac{\delta}{8}$. 

In conclusion, the conditions of Theorem \ref{ddq pac} are satisfied with probability $1-\delta$ and therefore the \ac{ddq} algorithm is \ac{pac}. Substituting $\zeta (\epsilon,\delta)$ into \eqref{learningcom} completes the proof. 

\end{proof}
\medskip


\section{Numerical Results}
\label{sim}

This section opens with a comparison of the \ac{ddq} algorithm to its parent technologies. 
It proceeds with additional comparisons to the state-of-the-art in both model-based \cite{szita2010} as well as model-free \cite{dong2019q} \ac{rl} algorithms.
For this comparison, the algorithms with the currently best sample complexity are implemented on a type of \ac{mdp} which has been proposed and used in literature as a model which is objectively difficult to learn~\cite{strehl2009}.

The first round of comparisons start with  R-max, Delayed Q-learning and \ac{ddq} being implemented on a small-scale grid-world example (Fig.~\ref{gw}). 
This example test case has nine states, with the initial state being the one labeled $1$, and the terminal (goal) state labeled $9$. 
Each state is assigned a reward of $0$ except for the terminal state which has $1$.
For this example, $\gamma  \coloneqq 0.8$. 
In all states but the terminal one, the system has four primitive actions available: {\small \textsf{down}} ($\mathsf{d}$), {\small \textsf{left}} ($\mathsf{l}$), {\small \textsf{up}} ($\mathsf{u}$), and {\small \textsf{right}} ($\mathsf{r}$). 
The grid-world of Fig.~\ref{gw} includes cells with two types of  boundaries: the boundaries marked with a single-line afford transition probabilities of $0.9$ through them;
the boundaries marked with a double line afford transitions through them at probability $0.1$. 
The optimal policy for this grid-world example is shown in Fig.~\ref{gww}.


\begin{figure}[h!]
\centering
\includegraphics[width=0.25\textwidth]{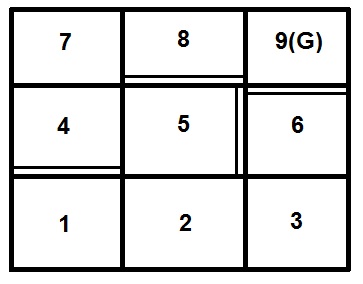}
\caption{The grid-world example.}
\label{gw}
\end{figure}

\begin{figure}[h!]
\centering
\includegraphics[width=0.25\textwidth]{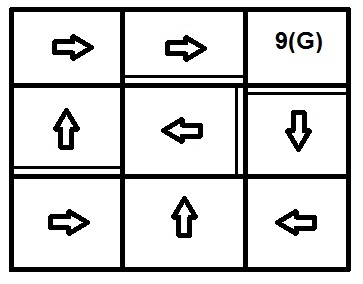}
\caption{The actual optimal policy in the grid-world example.}
\label{gww}
\end{figure}

Initializing the three \ac{pac} algorithms with parameters $m_1=65$, $m_2=175$ and $\varepsilon=0.06$, yields the performance metrics shown in Table~\ref{result}, which are measured in terms of the number of samples needed to reach at $4\varepsilon$ optimality, averaged over $10$ algorithm runs.
Parameters $m_1$ and $m_2$ are intentionally chosen to enable a fair comparison, in the sense that the sample complexity of the model-free Delayed Q-learning, and the model-based R-max algorithms is almost identical. 
In this case, and with these same tuning parameters, \ac{ddq} yields a modest but notable sample complexity improvement.

{\footnotesize
\begin{table}[h]
\centering
\caption{Average \# of samples for reaching $4\varepsilon$ optimality}
\label{result}
\setlength{\tabcolsep}{4pt}
\begin{tabular}{cc} 
\toprule
{\footnotesize Algorithms}    & {\footnotesize \# of samples}   \\
\midrule
\addlinespace
Delayed Q-learning                 & 
{\scriptsize $6622$} 
\\ \addlinespace
R-max                 & 
{\scriptsize $6727$} 
\\ \addlinespace
\ac{ddq}                 & 
{\scriptsize $5960$} 
\\ \addlinespace
\bottomrule
\end{tabular}
\end{table}
}

The lowest known bound on the sample complexity of a model-based \ac{rl} algorithm on a infinite-horizon \ac{mdp} is $\frac{|S||A|}{\epsilon^2{(1-\gamma)}^6}$ (by the Mormax algorithm~\cite{szita2010}). 
For the model-free case (again on a infinite-horizon \ac{mdp}), the lowest bound on the sample complexity is $\frac{|S||A|}{\epsilon^2{(1-\gamma)}^7}$, achieved by UCB Q-learning~\cite{dong2019q} (the extended version of \cite{jin2018q} which is for finite-horizon \ac{mdp}). 

To perform a fair and meaningful comparison of these algorithms to \ac{ddq}, consider a family of ``difficult-to-learn'' \ac{mdp} as Fig.~\ref{hmdp}. 
The \ac{mdp} has $N+2$ states as $S=\{1,2,…,N,+,- \},$ and $A$ different actions. Transitions from each state $i \in \{1,…,N\}$ are the same, so only transitions from state $1$ are shown.  
One of the actions (marked by solid line) deterministically transports the agent to state $+$ with reward $0.5+\epsilon'$ (with $\epsilon'>0$). 
Let $a$ be any of the other $A-1$ actions (represented by dashed lines). 
From any state $i \in \{1,…,N\}$, taking action $a$ will trigger a transition to state $+$ with reward $1$ and probability $p_{ia}$, or to state $-$ with reward $0$ and probability $1-p_{ia}$, where $p_{ia} \in \{0.5,0.5+2\epsilon'\}$ are numbers very close to $0.5+\epsilon'$. 
For each state $i \in \{1,…,N\}$, there is at most one $a$ such that $p_{ia}=0.5+2\epsilon'$. 
Transitions from states $+$ and $-$ are identical; they simply reset the agent to one of the states $\{1,…,N\}$ uniformly at random. 
\begin{figure}[h!]
\centering
\includegraphics[width=0.15\textwidth]{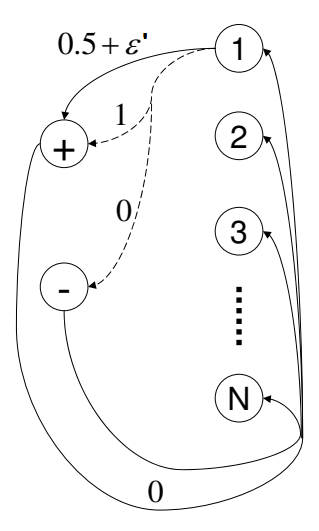}
\caption{A family of difficult-to-learn \ac{mdp}s. \cite{strehl2009}}
\label{hmdp}
\end{figure}

For an \ac{mdp} such as the one shown in Fig.~\ref{hmdp}, the optimal action in any state $i \in \{1,…,N\}$ is independent of the other states; 
specifically, it is the action marked by the solid arrow if $p_{ia}=0.5$ for all dashed actions $a$, or the action marked by the dashed arrow for which $p_{ia}=0.5+2\epsilon'$, otherwise.
Intuitively, this \ac{mdp} is hard to learn for exactly the same reason that a biased coin is hard to be recongized as such if its bias (say, the probability of landing on head) is close to $0.5$ \cite{strehl2009}. 

We thus try to learn such an \ac{mdp} $M$ with $N=2$, $A=2$, and $\epsilon'=0.04$. 
The accuracy that the learned policy should satisfy is set to $\epsilon=0.0025$, and the probability of failure is set to $\delta=0.01$. 
Results are averaged over $50$ runs of each algorithm running on \ac{mdp} $M$.

We empirically fine-tune the parameters of Mormax and UCB Q-learning algorithms to maximize their performance on learning the near optimal ($4\epsilon$-optimal) policy of $M$ in terms of the required samples. 
As expected, the required samples decrease (almost linearly) in $m$ (Fig.~\ref{fig3}) until the necessary condition for the convergence of the algorithm is violated (at around $m=600$).
For that reason, we cap $m$ at $600$ which requires $7770$ samples on average for Mormax to learn the optimal policy.
Yet another important performance metric to record for a model-based \ac{rl} algorithm is the number of times it needs resolve the learned model through value-iteration, since the associated computational effort is highly dependent on this number. 
For Mormax, the average number of times it performs model resolution is $12.06$.     
\begin{figure}[h!]
\centering
\includegraphics[width=0.45\textwidth]{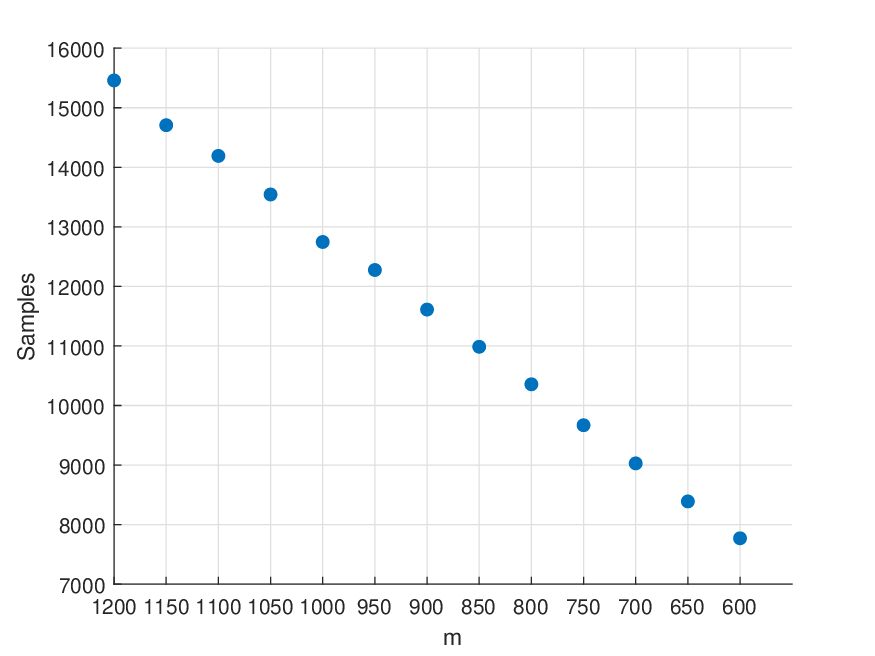}
\caption{The number of samples required by the Mormax algorithm.}
\label{fig3}
\end{figure}

The performance of the UCB Q-learning algorithm appears to be  very sensitive to its $c_2$ parameter.
The value of $4\sqrt{2}$ that has been suggested for $c_2$~\cite{dong2019q} proved very conservative, with the algorithm sometimes requiring millions of data for converging to the optimal policy on $M$. 
The reason is that values of $c_2$ that high cause the effective updates to start when the learning rate has already become very small, thus slowing down the convergence speed. 
We therefore tune the UCB Q-learning algorithm to achieve maximum performance on $M$ by setting its parameter $c_2=\sfrac{1}{50}$ (see Fig.~\ref{fig2}); with this setting, the algorithm requires $8097$ samples to learn the optimal policy on average.  
Setting $c_2<\sfrac{1}{50}$ may cause the algorithm to lie outside the upper confidence interval, and as a result, the algorithm either requires an actual higher number of samples or it fails to convege altogether to the optimal policy after $10^6$ samples.  
\begin{figure}[h!]
\centering
\includegraphics[width=0.45\textwidth]{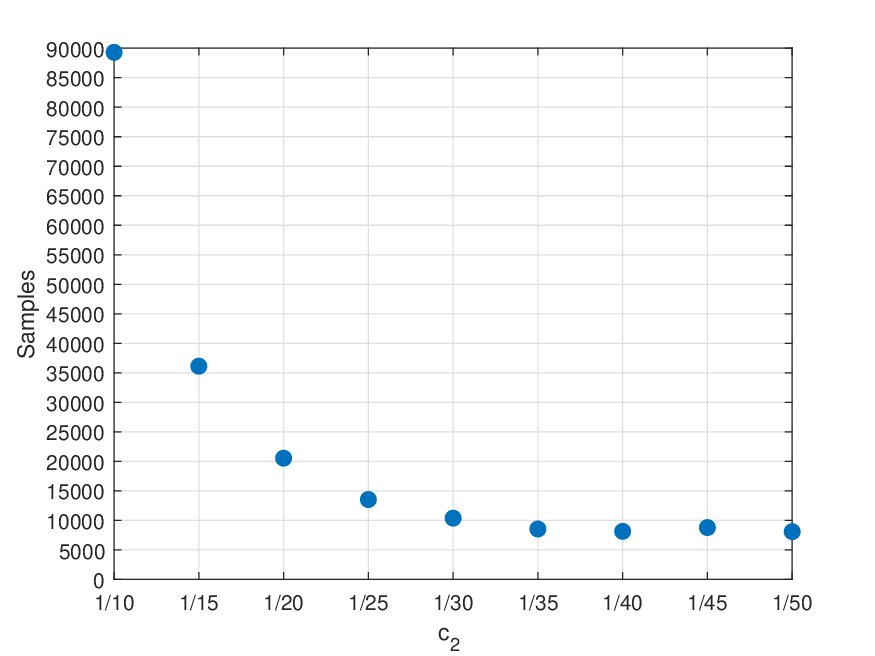}
\caption{The required samples by UCB Q-learning algorithm}
\label{fig2}
\end{figure}

We compare the best performance we could achieve with Mormax and UCB Q-learning with that of \ac{ddq} which we tune with $m_1=150$ and $m_2=750$. 
The average required samples required by \ac{ddq} for learning the $4\epsilon$-optimal policy on $M$ is $5662$, while the number of times that the R-max component of the algorithm resolves the model through value-iteration part is $3.76$ on average.

Thus, although the provable worst-case bound on the sample complexity of \ac{ddq} algorithm appears higher than that of Mormax and UCB Q-learning, \ac{ddq} can outperform both algorithms in terms of the required data samples, especially in difficult learning tasks.
What is more, the hybrid nature of \ac{ddq} algorithm enables significant savings in terms of computational effort ---the latter captured by the number of times when the algorithm resorts to model resolution--- compared to model-based algorithms like Mormax. 
Table~\ref{result2} summarizes the results of this comparison. 

{\footnotesize
\begin{table}[h]
\centering
\caption{The best possible performance on learning \ac{mdp} $M$}
\label{result2}
\setlength{\tabcolsep}{4pt}
\begin{tabular}{ccc} 
\toprule
{\footnotesize Algorithms}    & {\footnotesize \# of samples} &{\footnotesize \# of model resolution}   \\
\midrule
\addlinespace
Mormax                 & 
{\scriptsize $7770$}   &
{\scriptsize $12.06$}
\\ \addlinespace
UCB Q-learning                 & 
{\scriptsize $8097$}           &
{\scriptsize $0$ (model-free)}
\\ \addlinespace
\ac{ddq}                 & 
{\scriptsize $5662$}     &
{\scriptsize $3.76$}
\\ \addlinespace
\bottomrule
\end{tabular}
\end{table}
}
\medskip


\section{Conclusion} \label{section:conclusion}
It is possible to build an \ac{rl} algorithm that captures favorable features of both model-based and model-free learning and most importantly preserves the \ac{pac} property. 
One such algorithm is the \ac{ddq};
this algorithm leverages the idea of Dyna-Q to combine two existing \ac{pac} algorithms, namely the model-based R-max and the model-free Delayed Q-learning, in a way that achieves the best (complexity results) of both. 
Theoretical analysis establishes that \ac{ddq} enjoys a sample complexity that is at worst as high as the smallest of its constituent technologies; yet, in practice, as the numerical example included suggests, \ac{ddq} can outperform them both. 
In addition, numerical example on the comparison of \ac{ddq} to the state of the art in model-based and model free \ac{rl} suggests clear advantages in practical implementations.
\medskip


\bibliographystyle{unsrt}  
\bibliography{My_Collection} 


\appendix
\appendixpage

\section{Proof of Lemma~\ref{sbound}}
\label{L1}
Consider a fixed state-action pair $(s,a)$. 
Its value $Q(s,a)$ is initially set to $v_\mathrm{max}=\frac{1}{1-\gamma}$.
When an update of type-$1$ (Algorithm~\ref{alg:ddq} line 30) is successful $Q(s,a)$ is reduced by at least $\epsilon_1$. 
Since the reward function $R(s,a)$ is non-negative, we must have $Q(s,a) \geq 0$ in all timesteps, which means that there can be at most $\frac{1}{\epsilon_1(1-\gamma)}$ updates of type-$1$ for $(s,a)$.
On the other hand, a type-2 update  (Algorithm~\ref{alg:ddq} line 51) can  occur only once when $n(s,a)=m_2$. 
Therefore, the total number of successful timesteps for $(s,a)$ is at most $1+\frac{1}{\epsilon_1(1-\gamma)}$ times. 
With $|S| |A|$ total state-action pairs, the total number of successful timesteps is bounded by $\kappa = |S||A|+\frac{|S||A|}{(1-\gamma)\epsilon_1}$.

\section{Proof of Lemma~\ref{ubound}}
\label{L2} 
Suppose an attempted update  occurs at timestep $t$ to some $(s,a)$. 
By definition, for a subsequent attempted update to $(s,a)$ to occur at timestep $t' > t$, at least one successful timestep must occur between $t$ and $t'$.
Lemma \ref{sbound} ensures that there can be no more than $\kappa$ successful timesteps.
In other words, the most frequent occurrence of attempted updates is interlaced between successful updates, which implies that at most $1+\kappa$ attempted updates are possible for $(s,a)$. 
Scaling this argument to all  state-action pairs we arrive at the $|S||A|(1+\kappa)$ upper bound.

\section{Proof of Lemma~\ref{a1}}
\label{L3}
Let $ Q^{*}_{M_{K_t}}(s^{*},a^{*})$ denote $\max_{(s,a)} Q^{*}_{M_{K_t}}(s,a)$. If $(s^*,a^*) \notin K_t$, we are done since $Q^{*}_{M_{K_t}}(s^{*},a^{*})=Q_t(s^*,a^*) \leq \frac{1}{1-\gamma}$. Otherwise, for $(s^{*},a^{*}) \in K_t$  write
\begin{align*}
   Q^{*}_{M_{K_t}}(s^{*},a^{*}) &= 
   R(s^{*},a^{*})+ \gamma \sum_s T(s^{*},a^{*},s) \max_a Q^{*}_{M_{K_t}}(s,a) \\ 
   &\leq  R(s^{*},a^{*})+ \gamma \,Q^{*}_{M_{K_t}}(s^{*},a^{*})\, \sum_s T(s^{*},a^{*},s) \\
   &=  R(s^{*},a^{*})+ \gamma \,Q^{*}_{M_{K_t}}(s^{*},a^{*}) \\
   & \leq 
   1+ \gamma Q^{*}_{M_{K_t}}(s^{*},a^{*}) \implies 
   Q^{*}_{M_{K_t}}(s^{*},a^{*})
   \leq  \frac{1}{1-\gamma} 
\end{align*}

\section{Proof of Lemma~\ref{type1opt}}
\label{L4}
Let an update of type-$1$ occur for $(s,a)$ at timestep $t$. 
Suppose that the latest $m_1$ experiences of $(s,a)$ happened at timesteps $t_1 <t_2< \cdots <t_{m_1}=t$, when the system was rewarded 
$r[1], r[2], \ldots, r[m_1]$ and jumped to states $s[1], s[2], \ldots, s[m_1]$, respectively. 
Define the random variable $Y=r[i]+\gamma v^{*}_M(s[i])$ for $1 \leq i \leq m_1$ and note that $0 \leq Y \leq \frac{1}{1-\gamma}$. 
Then a direct application of the Hoeffding inequality for bounded random variables and with the choice of $m_1$ as in \eqref{m1} implies that
\begin{equation*}
    \frac{1}{m_1} \sum^{m_1}_{i=1} \big( r[i]+\gamma v^{*}_M(s[i]) \big) >
     \mathbb{E} \big\{Y\big\} - (\epsilon_1-2\epsilon_2) = Q^{*}_M(s,a)-\epsilon_1+2\epsilon_2
\end{equation*}
with probability $1-\sfrac{\delta}{8\big(|S||A|(1+\kappa)\big)}$.

Now we have:
\begin{align*}
    Q'(s,a)& =\frac{1}{m_1} \big( \sum^{m_1}_{i=1} r[i]+\gamma v_{t_i}(s[i]) \big) + \epsilon_1 
    \\ &\geq \frac{1}{m_1} \big( \sum^{m_1}_{i=1} r[i]+\gamma v^{*}_M(s[i]) \big) - 2\gamma\epsilon_2 +\epsilon_1 
    \\ &\geq Q^{*}_M(s,a) - \epsilon_1 + 2\epsilon_2 - 2\gamma\epsilon_2 + \epsilon_1 \geq Q^{*}_M(s,a)
\end{align*}
Finally, we want this fact to be true for all possible attempted updates of type-$1$. According to Lemma~\ref{ubound}, an upper bound for all possible attempted updates is $|S||A|(1+\kappa)$. Therefore, the above fact is true with probability at least  $\big(1-\sfrac{\delta}{8\big(|S||A|(1+\kappa)\big)}\big)^{|S||A|(1+\kappa)}$. 
An induction argument can now be employed to show that $1-\frac{\delta}{8}$ bounds the above expression from below.

\section{Proof of Lemma~\ref{a2}}
\label{L6}
First note that $K^{2}_{t_1} \subseteq K^{2}_{t_2}$. For all $(s,a) \notin K^{2}_{t_2}$
\begin{equation} \label{t1t2}
     Q^{*}_{M_{K^{2}_{t_1}}}(s,a) = Q_{t_1}(s,a) \geq Q_{t_2}(s,a) = Q^{*}_{M_{K^{2}_{t_2}}}(s,a)
\end{equation}
while for all $(s,a) \in K^{2}_{t_1}$
\begin{align*}
Q^{*}_{M_{K^{2}_{t_1}}}\!\!\!(s,a) &= R(s,a)+\gamma \sum_{s^{'}} T(s,a,s') \max_{a'} Q^{*}_{M_{K^{2}_{t_1}}}\!\!\!(s',a')
\\
Q^{*}_{M_{K^{2}_{t_2}}}\!\!\!(s,a) &= R(s,a)+\gamma \sum_{s^{'}} T(s,a,s') \max_{a'} Q^{*}_{M_{K^{2}_{t_2}}}\!\!\!(s',a')
\end{align*}
implying
\begin{equation}
\label{eqa}
    Q^{*}_{M_{K^{2}_{t_1}}}\!\!\!(s,a) - Q^{*}_{M_{K^{2}_{t_2}}}\!\!\!(s,a) = \gamma \sum_{s^{'}} T(s,a,s') 
    \times \big( \max_{a'} Q^{*}_{M_{K^{2}_{t_1}}}\!\!\!(s',a') - \max_{a'} Q^{*}_{M_{K^{2}_{t_2}}}\!\!\!(s',a')  \big)
\end{equation}
Every $(s,a) \in K^{2}_{t_2} \setminus K^{2}_{t_1}$ falls in one of the following categories:
\begin{itemize}[leftmargin=*]
    \item $(s,a$) is a state-action pair that has not been updated ever before or at timestep $t_1$. The Lemma~\ref{a1} implies
    \begin{equation*}
        Q^{*}_{M_{K^{2}_{t_1}}}\!\!\!(s,a)=Q_{t_1}\!\!(s,a)=v_\mathrm{max}=\frac{1}{1-\gamma} \geq Q^{*}_{M_{K^{2}_{t_2}}}\!\!\!(s,a)
    \end{equation*}
    which completes the proof.
    \item $(s,a)$ is a state-action pair that has experienced an type-$1$ update before or at $t_1$. 
    Assume that the most recent type-$1$ update of $(s,a)$ occurred at some timestep $t \leq t_1$. 
    Suppose that the $m_1$ visits to $(s,a)$ that triggered this update occurred at instances $t^1 < t^2 < ... < t^{m_1}=t \leq t_1$, and the observed rewards and next states were $r[1],r[2],...,r[m_1]$ and $s[1],s[2],...,s[m_1]$, respectively. 
    For the random variable $Z=r[i]+\gamma v_{t}(s[i])$,
    \[
        \mathbb{E}\{Z\}=R(s,a)+\gamma \sum_{s'} T(s,a,s') \max_{a'} Q_{t}(s',a')
    \]
    Then
    \begin{equation*}
    Q^{*}_{M_{K^{2}_{t_1}}}(s,a)=Q_{t_1}(s,a)=Q_t(s,a) =
       \frac{\sum^{m_1}_{i=1} r[i]+\gamma v_{t^i}(s[i])}{m_1} + \epsilon_1 
       \ge 
       \frac{\sum^{m_1}_{i=1} r[i]+\gamma v_{t}(s[i])}{m_1} + \epsilon_1
    \end{equation*}
    and applying Hoeffding inequality to the right hand side
    \begin{align*}
    Q^{*}_{M_{K^{2}_{t_1}}}(s,a) 
        > \mathbb{E}\{Z\}-\epsilon_1+2\epsilon_2+\epsilon_1  
       &= R(s,a)+\gamma \sum_{s'} T(s,a,s') \max_{a'} Q_{t}(s',a') +2\epsilon_2  \\
       &\geq R(s,a)+\gamma \sum_{s^{'}} T(s,a,s') \max_{a'} Q_{t_1}\!(s',a') +2\epsilon_2  \\
       &\stackrel{\eqref{t1t2}}{\geq} R(s,a)+\gamma \sum_{s'} T(s,a,s') \max_{a'} Q^{*}_{M_{K^{2}_{t_1}}}\!\!\!(s',a')
    \end{align*}
    with probability $1-\tfrac{\delta}{8|S||A|(1+\kappa)}$. 
    Then --- following the final steps of Lemma~\ref{type1opt} --- with probability at least $1-\frac{\delta}{8}$
     after all possible attempted updates,
    \begin{equation}
    \label{eqb}
    Q^{*}_{M_{K^{2}_{t_1}}}(s,a) - Q^{*}_{M_{K^{2}_{t_2}}}(s,a) \geq 
    \gamma \sum_{s^{'}} T(s,a,s') \big( \max_{a'} Q^{*}_{M_{K^{2}_{t_1}}}(s',a') - \max_{a'} Q^{*}_{M_{K^{2}_{t_2}}}(s',a')  \big)
\end{equation}
\end{itemize}
In any case, therefore, i.e., either when $(s,a) \notin K_{t_2}^2$ or when $(s,a) \in K_{t_2}^2 \setminus K_{t_1}^2$, one can define
\begin{align*}
 \alpha &\coloneq \min_{(s,a)} \big( Q^{*}_{M_{K^{2}_{t_1}}}(s,a) - Q^{*}_{M_{K^{2}_{t_2}}}(s,a) \big)  \coloneq 
 Q^{*}_{M_{K^{2}_{t_1}}}(s^{*},a^{*}) - Q^{*}_{M_{K^{2}_{t_2}}}(s^{*},a^{*})   
\end{align*}
and if $\alpha \geq 0$ recognize that the proof is completed. 
Assume for the sake of argument that $\alpha < 0$; then still either \eqref{eqa} is true if $(s,a) \notin K_{t_2}^2$, or \eqref{eqb} if $(s,a) \in K_{t_2}^2 \setminus K_{t_1}^2$. 
Let $ a_{s'} \coloneq \arg\max_{a'} Q^{*}_{M_{K^{2}_{t_2}}}(s',a')$, then in either case,
\begin{align*}
    \alpha &= 
     Q^{*}_{M_{K^{2}_{t_1}}}(s^{*},a^{*}) - Q^{*}_{M_{K^{2}_{t_2}}}(s^{*},a^{*}) 
    \\
    & \geq \!
    \gamma \!\sum_{s'} T(s^{*}\!,a^{*}\!,s') \big(\! \max_{a'} Q^{*}_{M_{K^{2}_{t_1}}}\!\!(s',a')\! - \! \max_{a'} Q^{*}_{M_{K^{2}_{t_2}}}\!\!(s',a') \! \big) \\ 
    & = \!
    \gamma \!\sum_{s'} T(s^{*}\!,a^{*}\!,s') \big(\! \max_{a'} Q^{*}_{M_{K^{2}_{t_1}}}\!\!(s',a')\! - \! Q^{*}_{M_{K^{2}_{t_2}}}\!\!(s',a_{s'}) \! \big) \\
     & \geq \!
    \gamma \!\sum_{s'} T(s^{*}\!,a^{*}\!,s') \big(\! Q^{*}_{M_{K^{2}_{t_1}}}\!\!(s',a_{s'})\! - \! Q^{*}_{M_{K^{2}_{t_2}}}\!\!(s',a_{s'}) \! \big) \\ &\geq 
    \gamma \alpha \implies \alpha \geq 0
\end{align*}
which is a contradiction.  
Therefore $\alpha$ cannot be negative and therefore $Q^{*}_{M_{K^{2}_{t_1}}}(s,a) - Q^{*}_{M_{K^{2}_{t_2}}}(s,a) \ge 0$.

\section{Proof of Lemma~\ref{lubk2}}
\label{L8}
For all $(s,a) \notin K^{2}_t$
\begin{subequations}\label{lm8fst}
\begin{equation}
    \label{lm8s1}
Q_t(s,a) = Q^{*}_{M_{K^{2}_t}}(s,a) \leq Q^{*}_{M_{K^{2}_t}}(s,a) + 2\epsilon_2
\end{equation}
Now for $(s,a) \in K^{2}_t$, and referring to line 50 of Algorithm~\ref{alg:ddq} one sees that for timestep $t$ it is $Q_t(s,a) \leq Q_\mathrm{vl}(s,a)$.  
Meanwhile, for timestep $t$ Lemma~\ref{vi} ensures
\begin{equation}
    \label{lm8s2}
Q_\mathrm{vl}(s,a) \leq Q^{*}_{\hat{M}_{K^{2}_t}}(s,a) + \epsilon_2
\end{equation}
while Lemma~\ref{estimation} implies
\begin{equation}
    \label{lm8s3}
    Q^{*}_{\hat{M}_{K^{2}_t}}(s,a) + \epsilon_2 \leq Q^{*}_{M_{K^{2}_t}}(s,a) + 2\epsilon_2 
\end{equation}
\end{subequations}
with probability $1-\frac{\delta}{8}$. 
Combining \eqref{lm8fst} one obtains the right hand side of \eqref{lm8}.
Establishing the left hand side of \eqref{lm8} is 
done by strong induction. 
At $t=1$, we have $K^{2}_1 = \emptyset$ and thus
\begin{equation*}
Q_1(s,a) = Q^{*}_{M_{K^{2}_1}}(s,a) \geq Q^{*}_{M_{K^{2}_1}}(s,a) - 2\epsilon_2
\end{equation*}
Assume that $Q_{t}(s,a) = Q^{*}_{M_{K^{2}_{t}}}(s,a) \geq Q^{*}_{M_{K^{2}_{t}}}(s,a) - 2\epsilon_2$ for $t \le n-1$. 
If timestep $t=n$ is not a successful timestep (Definition~\ref{successful-t}), nothing happens so equality holds; thus let us assume that $t=n$ is successful.
Then, and for all $(s,a) \notin K^{2}_n$ we have automatically
\begin{equation*}
Q_n(s,a) = Q^{*}_{M_{K^{2}_n}}(s,a) \geq Q^{*}_{M_{K^{2}_n}}(s,a) - 2\epsilon_2
\end{equation*}
Just as before, for $(s,a) \in K^{2}_n$ for which a type-$2$ update succeeded at timestep $t$
\begin{subequations}\label{lm8snd}
\begin{equation}
Q_n(s,a) = Q_\mathrm{vl}(s,a) \geq Q^{*}_{\hat{M}_{K^{2}_n}}(s,a) - \epsilon_2 
\end{equation}
as a result of Lemma~\ref{vi}, and
\begin{equation}
Q^{*}_{\hat{M}_{K^{2}_n}}(s,a) - \epsilon_2 \geq      Q^{*}_{M_{K^{2}_n}}(s,a) - 2\epsilon_2 
\end{equation}
with probability $1-\frac{\delta}{8}$, due to Lemma~\ref{estimation}. 
\end{subequations}
For those $(s,a) \in K^{2}_n$ for which a type-$2$ update did \emph{not} succeed at timestep $t$, it is $Q_n(s,a)=Q_{n-1}(s,a)$ and there are three distinct possibilities:
\begin{itemize}[leftmargin=*]
    \item Value $Q_{n-1}(s,a)$ has never been updated before. 
    Then,
    \begin{equation*}
        Q_n(s,a)=\tfrac{1}{1-\gamma} \stackrel{\text{Lemma}~\ref{a1}}{\geq} Q^{*}_{M_{K^{2}_n}}(s,a) \geq Q^{*}_{M_{K^{2}_n}}(s,a) - 2\epsilon_2 
    \end{equation*}
    \item The most recent update for $(s,a)$ was of type-$2$ and occured at some $t \leq n-1$. 
    Then,
    \begin{equation*}
    Q_n(s,a) \stackrel{\text{Lemmas~ \ref{vi}\&\ref{estimation}}}{\geq} Q^{*}_{M_{K^{2}_t}}(s,a) - 2\epsilon_2 
    \end{equation*}
    with probability $1-\tfrac{\delta}{8}$, and
    \begin{equation*}
    Q^{*}_{M_{K^{2}_t}}(s,a) - 2\epsilon_2 
    \stackrel{\text{Lemma}~\ref{a2}}{\geq}
    Q^{*}_{M_{K^{2}_n}}(s,a) - 2\epsilon_2 
    \end{equation*}
    also with with probability $1-\tfrac{\delta}{8}$, so
    \[
    Q_n(s,a) \ge Q^{*}_{M_{K^{2}_n}}(s,a) - 2\epsilon_2
    \]
    with probability at least $1-\tfrac{2\delta}{8} \le (1 - \tfrac{\delta}{8})^2$.       
    \item The most recent update for $(s,a)$ was of type-$1$ and occured at some $t' \leq n-1$. 
    Then suppose that the $m_1$ collection of visits of $(s,a)$ for this update occurred at timesteps $t^1 < t^2 < \cdots < t^{m_1}=t^{'} \leq n-1$, with the corresponding observed reward and next states being $r[1],r[2],\ldots,r[m_1]$ and $s[1],s[2],\ldots,s[m_1]$, respectively. 
    The expectation of the random variable $F=r[i]+\gamma v_{t^{m_1}}(s[i])$ is
    \begin{equation*}
        \mathbb{E}\{F\}=R(s,a)+\gamma \sum_{s'} T(s,a,s') \max_{a'} Q_{t^{m_1}}(s',a')
    \end{equation*}
    which, with the use of Hoeffding inequality, bounds the sum in
    \begin{align*}
       Q_n(s,a)&= \frac{1}{m_1} \left(\sum^{m_1}_{i=1} r[i]+\gamma v_{t^i}(s[i]) \right) + \epsilon_1  \\
       &\geq \frac{1}{m_1} \left(\sum^{m_1}_{i=1} r[i]+\gamma v_{t^{m_1}}(s[i]) \right) + \epsilon_1 > \mathbb{E}\{F\}-\epsilon_1+2\epsilon_2+\epsilon_1 \\
       &= R(s,a)+\gamma \sum_{s'} T(s,a,s') \max_{a'} Q_{t'}(s',a') +2\epsilon_2 
    \end{align*}
    and yields
    \[
    Q_n(s,a) \ge 
    R(s,a)+\gamma \sum_{s'} T(s,a,s') \max_{a'} Q_{n}(s',a')
    \]
    with probability $1-\sfrac{\delta}{8\big(|S||A|(1+\kappa)\big)}$. 
    Following the steps in the proof of Lemma~\ref{type1opt} when thinking of  all possible attempted updates, one states the above with probability $1-\frac{\delta}{8}$.
    Subtracting now $Q^{*}_{M_{K^{2}_{n}}}(s,a)$ from both sides yields
    \begin{equation}
    \label{type13}
    \gamma \sum_{s'} T(s,a,s') \left( \max_{a'} Q_n(s',a') - \max_{a'} Q^{*}_{M_{K^{2}_{n}}}(s',a')  \right)
    \le     Q_n(s,a) - Q^{*}_{M_{K^{2}_{n}}}(s,a)
    \end{equation}
\end{itemize}
and if one denotes
\begin{equation*}
 \alpha \coloneq \min_{(s,a)} \big( Q_n(s,a) - Q^{*}_{M_{K^{2}_{n}}}(s,a) \big)
 = Q_n(s^{*},a^{*}) - Q^{*}_{M_{K^{2}_{n}}}(s^{*},a^{*})   
\end{equation*}
then we want to show $\alpha \geq -2\epsilon_2$. Let $ a_{s'} \coloneq \arg\max_{a'} Q^{*}_{M_{K^{2}_{n}}}(s',a')$, then \eqref{type13} implies
\begin{align*}
    \alpha &= 
     Q_n(s^{*},a^{*}) - Q^{*}_{M_{K^{2}_{n}}}(s^{*},a^{*}) 
    \\
    & \geq \!
    \gamma \!\sum_{s'} T(s^{*}\!,a^{*}\!,s') \big(\! \max_{a'} Q_n\!\!(s',a')\! - \! \max_{a'} Q^{*}_{M_{K^{2}_{n}}}\!\!(s',a') \! \big) \\ 
    & = \!
    \gamma \!\sum_{s'} T(s^{*}\!,a^{*}\!,s') \big(\! \max_{a'} Q_n\!\!(s',a')\! - \! Q^{*}_{M_{K^{2}_{n}}}\!\!(s',a_{s'}) \! \big) \\
     & \geq \!
    \gamma \!\sum_{s'} T(s^{*}\!,a^{*}\!,s') \big(\! Q_n\!\!(s',a_{s'})\! - \! Q^{*}_{M_{K^{2}_{n}}}\!\!(s',a_{s'}) \! \big) \\ &\geq 
    \gamma \alpha \implies \alpha \geq 0 \geq -2\epsilon_2
\end{align*}

Summing up, the right side of \eqref{lm8} holds with probability $1-\frac{\delta}{8}$, while the left side is true with probability at least $(1-\frac{\delta}{12})^2$.
Together, both inequalities are true with probability at least $(1-\frac{\delta}{12})^3 \ge 1 -\frac{3\delta}{8}$.

\section{Proof of Lemma~\ref{lemma6}}
\label{L9}
Assume that at timestep $t$, $(s,a) \notin K_t$, $l(s,a)=0$ and $\mathsf{learn}(s,a)=\mathrm{true}$, and suppose that $m_1$ experiences of $(s,a)$ after $t$ happen at timesteps $t \leq t_1<t_2<\cdots<t_{m_1}$. Let $r[1],r[2],\ldots,r[m_1]$ and $s[1],s[2],\ldots,s[m_1]$ be the rewards and next states observed for the $m_1$ experiences of $(s,a)$.
Then define the random variable $X = r[i]+ \gamma v_{t_1}(s[i])$ letting $i$ range in $\{1,\ldots, m_1\}$, and note that $0 \leq X \leq 1$.

A direct application of the Hoeffding inequality with the choice of $m_1$ as in \eqref{m1} yields
\begin{equation*}
    \frac{1}{m_1} \left(\sum^{m_1}_{i=1} r[i] + \gamma v_{t_1}(s[i]) \right) - \mathbb{E} \big\{X\big\} < \epsilon_1-2\epsilon_2 < \epsilon_1
\end{equation*}
with probability $1- \frac{\delta}{8|S||A|(1+\kappa)}$.
Since the \ac{ddq} algorithm only allows for updates that decrease the value estimate for any stat-action pairs, we can write:
\begin{align*}
    \label{lem6}
    Q_{t}(s,a)- \frac{1}{m_1} \left(\sum^{m_1}_{i=1} r[i] + \gamma v_{t_i}(s[i]) \right) 
    &\geq Q_t(s,a)- \frac{1}{m_1} \left(\sum^{m_1}_{i=1} r[i] + \gamma v_{t_1}(s[i]) \right) 
    \\ &> Q_t(s,a) - \mathbb{E} \big\{X\big\} - \epsilon_1 
\end{align*}
and because $(s,a) \notin K_t$ meaning $Q_t(s,a)-\mathbb{E} \big\{X\big\} > 3\epsilon_1$,
\[    Q_t(s,a) - \mathbb{E} \big\{X\big\} - \epsilon_1 > 2\epsilon_1
\]
guaranteeing success for the type-1 update at timestep $t_{m_1}$.
Since for the case that $l(s,a)=0$ and $\mathsf{learn}(s,a)=\mathrm{true}$, an attempted update will necessarily happen; there can be at most $|S||A|(1+\kappa)$ instances of such an event. 
Working in a fashion similar to the proof of Lemma~\ref{type1opt}, one concludes that the lemma's statement holds with probability at least $1-\frac{\delta}{8}$.

\section{Proof of Lemma~\ref{lemma7}}
\label{L10}
We will assume that $(s,a)$ has not already been visited $m_2$ times before timestep $t$, because then it is obvious that $(s,a) \in K_{t+1}$.
Thus we work under the assumption that $(s,a)$ has been visited fewer than $m_2$ times up until $t$, at which time an unsuccessful update of $(s,a)$ occurs, while right after at $t+1$ we see $\mathsf{learn}(s,a)=\mathrm{false}$.
Now set up a contradiction argument: under those conditions, \emph{assume that $(s,a) \notin K_{t+1}$.}
Since the update at $t$ was unsuccessful, $K_{t+1} = K_t$, which would also imply that $(s,a) \notin K_t$.
Now label the times of the most recent $m_1$ experiences of $(s,a)$ as $b(s,a) \triangleq t_1<t_2<\cdots<t_{m_1}=t$.
The contrapositive of the statement proved in Lemma~\ref{lemma6}, suggests that since the update at $t$ is unsuccessful, it must be $(s,a) \in K_{t_1}$.
Since $(s,a) \notin K_t$, some timestep between $t_1$ and $t$ must have been successful.
Let us denote that successful timestep $t^\ast > b(s,a)$.
But then the condition $t_1 = b(s,a) < t^\ast$ would not allow the $\mathsf{learn}$ flag to be set to $\mathrm{false}$ in between these two timesteps, and we know from the statement of the lemma that this is true.
Therefore, we have a contradiction; the assumption made is invalid, and therefore $(s,a) \in K_{t} = K_{t+1}$.

\section{Proof of Lemma~\ref{ebound}}
\label{L11}
Fix a state-action pair $(s,a)$, We begin by showing that if $(s,a) \notin K_t$  at timestep $t$, then within at most $2m_1$ more experiences of $(s,a)$ after $t$, a successful timestep for $(s,a)$ must occur.
Toward that end, we analyse the worst case where $m_2$-th visit of $(s,a)$ will not occur within $2m_1$ more experiences of $(s,a)$ after timestep $t$. For $(s,a) \notin K_t$, distinguish two possible cases at the beginning of timestep $t$: either $\mathsf{learn}(s,a) = \mathrm{false}$ or $\mathsf{learn}(s,a) = \mathrm{true}$.
Consider first the case where $\mathsf{learn}(s,a) = \mathrm{false}$. Assume that the most recent attempted update of $(s,a)$ occurred at some timestep $t'$ which was unsuccessful and set the flag $\mathsf{learn}(s,a)$ to $\mathrm{false}$. Then, according to Lemma~\ref{lemma7}, it will be $(s,a) \in K_{t'+1}$.
However, now it is $(s,a) \notin K_t$, which implies that a successful timestep must have occurred at some $t^\ast$ with $t'+1<t^\ast<t$. Thus the flag $\mathsf{learn}(s,a)$ will set to $\mathrm{true}$ during timestep $t$. Then, at $t$ we have all conditions of Lemma~\ref{lemma6} (i.e. $\mathsf{learn}(s,a)=\mathrm{true}$, $(s,a) \notin K_{t}$ and $l(s,a)=0$) and thus the type-1 update upon $m_1$-th visit of $(s,a)$ after $t$ will be successful. 

Take now the case where $\mathsf{learn}(s,a) = \mathrm{true}$.
We know that an attempted type-1 update for $(s,a)$ will occur in at most $m_1$ experiences of $(s,a)$, and those are assumed occurring at timesteps $t_1<\cdots<t_{m_1}$, then $t_1\le t \le t_{m_1}$.
Consider the two possibilities: $(s,a) \notin K_{t_1}$ or $(s,a) \in K_{t_1}$.
In the former case, Lemma~\ref{lemma6} indicates that the attempted update type-1 at $t_{m_1}$ will be successful.
In the latter case, given that $(s,a) \notin K_t$, a successful timestep $t^\ast$ must have taken place between $t_1$ and $t$ (since $K_{t_1}\neq K_t$). Thus, however the attempted update at $t_{m_1}$ is unsuccessful, $\mathsf{learn}(s,a)$ will remain $\mathrm{true}$ and at timestep $t_{m_1}+1$ we will have $\mathsf{learn}(s,a)=\mathrm{true}$, $l(s,a)=0$, and $(s,a)\notin K_{t_{m_1}+1}$; this would trigger Lemma~\ref{lemma6}, and the attempted update type-1 upon $m_1$-th visit of $(s,a)$ after timestep $t_{m_1}+1$ (which is within at most $2m_1$ more experiences of $(s,a)$ after $t$), will be successful. 

Thus far, we showed that after $(s,a) \notin K_t$, within at most $2m_1$ more experiences of $(s,a)$, at least one successful timestep for $(s,a)$ must occur. According to lemma \ref{sbound}, the total number of successful timesteps for $(s,a)$ are bounded by $1+\frac{1}{(1-\gamma)\epsilon_1}$. This means that the total number of timesteps with $(s,a) \notin K_t$ is bounded by $2m_1(1+\frac{1}{(1-\gamma)\epsilon_1})$. On the other hand, once a state-action pair $(s,a)$ is experienced for $m_2$-th time at any timestep $t$, it will become a member of $K_t$ and will never leave $K_t$ anymore. So, $m_2$ is another upper-bound for the number of timesteps with $(s,a) \notin K_t$.

Generalizing the above fact for all state-action pairs, we conclude that the total number of escape events (timesteps $t$ with $(s_t,a_t) \notin K_t$) is bounded by $\min{(2 m_1 \kappa, |S||A| m_2)}$.

\end{document}